\theoremstyle{plain}
\newtheorem{theorem}{Theorem}[section]
\newtheorem{lemma}[theorem]{Lemma}
\newtheorem{corollary}[theorem]{Corollary}
\theoremstyle{definition}
\newtheorem{assumption}[theorem]{Assumption}
\theoremstyle{remark}
\icmltitlerunning{Optimistic Planning by Regularized Dynamic Programming}
\newcommand{\CB}{\textup{CB}}
\newcommand{\piout}{\pi_{\text{out}}}
\newcommand{\reset}{reset\xspace}
\newcommand{\dd}{\mathrm{d}}
\newcommand{\F}{\mathcal{F}}
\newcommand{\real}{\mathbb{R}}
\newcommand{\trace}[1]{\mbox{tr}\left(#1\right)}
\newcommand{\II}[1]{\mathbb{I}_{\left\{#1\right\}}}
\newcommand{\PP}[1]{\mathbb{P}\left[#1\right]}
\newcommand{\EE}[1]{\mathbb{E}\left[#1\right]}
\newcommand{\EEs}[2]{\mathbb{E}_{#2}\left[#1\right]}
\newcommand{\EEcc}[2]{\mathbb{E}\left[\left.#1\right|#2\right]}
\newcommand{\ra}{\rightarrow}
\newcommand{\babs}[1]{\bigl| #1 \bigr|}
\newcommand{\iprod}[2]{\left\langle#1,#2\right\rangle}
\newcommand{\biprod}[2]{\bigl\langle#1,#2\bigr\rangle}
\newcommand{\norm}[1]{\left\|#1\right\|}
\newcommand{\twonorm}[1]{\norm{#1}_2}
\newcommand{\ev}[1]{\left\{#1\right\}}
\newcommand{\pa}[1]{\left(#1\right)}
\newcommand{\bpa}[1]{\bigl(#1\bigr)}
\newcommand{\wh}{\widehat}
\newcommand{\wt}{\widetilde}
\newcommand{\htheta}{\wh{\theta}}
\newcommand{\hP}{\wh{P}}
\newcommand{\transpose}{^\mathsf{\scriptscriptstyle T}}
\definecolor{PalePurp}{rgb}{0.66,0.57,0.66}
\newcommand{\MainAlg}{\textbf{\texttt{RAVI-UCB}}\xspace}
\newcommand{\PlanningAlg}{\MainAlg}
\renewcommand{\phi}{\varphi}
\renewcommand{\epsilon}{\varepsilon}
\begin{document}

\twocolumn[
\icmltitle{Optimistic Planning by Regularized Dynamic Programming}



\icmlsetsymbol{equal}{*}

\begin{icmlauthorlist}
\icmlauthor{Antoine Moulin}{upf}
\icmlauthor{Gergely Neu}{upf}
\end{icmlauthorlist}

\icmlaffiliation{upf}{Universitat Pompeu Fabra, Barcelona, Spain}

\icmlcorrespondingauthor{Antoine Moulin}{antoine.moulin@upf.edu}
\icmlcorrespondingauthor{Gergely Neu}{gergely.neu@gmail.com}

\icmlkeywords{Machine Learning, ICML}

\vskip 0.3in
]



\printAffiliationsAndNotice{}  

\begin{abstract}
    We propose a new method for optimistic planning in infinite-horizon discounted Markov decision processes based on the idea of adding regularization to the updates of an otherwise standard approximate value iteration procedure. This technique allows us to avoid contraction and monotonicity arguments typically required by existing analyses of approximate dynamic programming methods, and in particular to use approximate transition functions estimated via least-squares procedures in MDPs with linear function approximation. We use our method to recover known guarantees in tabular MDPs and to provide a computationally efficient algorithm for learning near-optimal policies in discounted linear mixture MDPs from a single stream of experience, and show it achieves near-optimal statistical guarantees.
\end{abstract}


\section{Introduction}\label{sec:introduction}
The idea of constructing a confidence set of statistically plausible models and picking a policy that maximizes the expected return in the best of these models can be traced back to the pioneering work of \citet{LR85} in the context of multi-armed bandit problems, and has been successfully extended to address the exploration-exploitation dilemma in reinforcement learning (RL, \citealp{SB18}). This popular design principle came to be known as \emph{optimism in the face of uncertainty}, and the associated optimization task as the problem of \emph{optimistic planning}. The optimistic principle has driven the development of statistically efficient RL algorithms for a variety of problem settings. Following the work of \citet{BT02,SLL09} on optimistic exploration methods for RL in Markov decision processes (MDPs), a breakthrough was achieved by \citet*{jaksch2010near}, whose UCRL2 algorithm was shown to achieve near-optimal regret guarantees in a broad class of tabular MDPs. In subsequent years, their work inspired an impressive amount of follow-up work, leading to a variety of extensions, improvements, and other mutations.

The computational efficiency of such optimistic methods crucially hinges on the implementation of the optimistic planning subroutine. In the work of \citet{jaksch2010near}, this was addressed by a procedure called \emph{extended value iteration} (EVI), which performs dynamic programming (DP) in an auxiliary MDP where the confidence set of models is projected to the space of actions, allowing the realization of arbitrary transitions that are statistically plausible given all past experience. After mild adjustments, the EVI procedure can be shown to give near-optimal solutions to the optimistic planning problem in a computationally efficient manner (cf.~\citealp{fruit2018efficient} and Section~38.5.2 in \citealp{LSz20}). Other, even more effective optimistic dynamic programming procedures have been proposed and analyzed \citep{fruit2018efficient,QFPL18}. However, these computational developments have been largely restricted to the relatively simple tabular setting.

In recent years, the RL theory literature has seen a massive revival largely due to the breakthrough achieved by \citet*{jin2020provably}, who successfully extended the idea of optimistic exploration to a class of large-scale MDPs using linear function approximation. While extremely influential, their approach (and virtually all of its numerous follow-ups) are limited to the relatively simple setting of \emph{finite-horizon MDPs}. The reason for this limitation is inherent in their algorithm design that crucially uses the fact that optimistic planning in finite-horizon MDPs can be solved via a simple backward recursion over the time indices within each episode \citep{neu2020unifying}. This idea completely fails for infinite-horizon problems where dynamic programming methods should aim to approximate the solution of \emph{a fixed-point equation}. Solving such fixed-point equations is possible in the tabular case but no known efficient method exists for linear function approximation, the short reason being that the least-squares transition estimator used in the construction of \citet{jin2020provably} cannot be straightforwardly used to build an approximate Bellman operator that satisfies the necessary contraction properties.

The best attempt at tackling the infinite-horizon setting under function approximation we are aware of is by \citet*{wei2021learning}, who propose algorithms that are either statistically or computationally efficient, but fall short of providing an algorithm with both of these desired properties. Another good contribution was made by \citet*{VPSS22}, who provided approximate DP methods for stochastic shortest path problems with linear transition functions, and analyzed them via studying the concentration properties of the empirical transition operator. This technique allowed them to prove regret bounds, but the guarantees did not reach optimality in terms of scaling with the time horizon unless strong assumptions are made. Notably, \citet{VPSS22} only managed to perform a tight analysis in the special case where the features are orthogonal, which allowed them to reason about contraction properties of the empirical Bellman operator. Lacking a general contraction argument, or another idea that would enable computationally efficient optimistic planning, efficient exploration-exploitation in infinite-horizon MDPs under function approximation has remained unsolved so far.\looseness=-1

This is the problem we address in this paper in the context of \emph{discounted} infinite-horizon MDPs. Instead of relying on a contraction argument (or an approximate version thereof), we propose to solve the optimistic planning problem using \emph{regularized dynamic programming}. In particular, we consider a variant of the Mirror-Descent Modified Policy Iteration (MD-MPI) algorithm of \citet*{geist2019theory} that uses a least-squares estimator of the transition kernel and an exploration bonus to define an optimistic regularized Bellman operator. Using arguments from the classic analysis of mirror descent methods, we show that each application of this optimistic operator improves the quality of the policy up to an additive error term that telescopes over the iterations. In other words, we show that each iteration improves over the last one in an \emph{average} sense. This is in stark contrast to arguments used for analyzing previous optimistic planning methods that relied on contraction arguments which guarantee \emph{strict} improvements to the policy in each iteration. The advantage is that it remains applicable even when the approximate dynamic programming operator is not contractive or monotone (even approximately).

Our concrete contribution is applying the above scheme to discounted linear mixture MDPs and showing that it achieves a near-optimal regret bound of order $\sqrt{\sprt{B^2 d H + d^2 H^3 + \log \abs{\calA} H^4} T}$, where $d$ is the feature dimension, $B$ is a bound on the norm of the features, and $H = \frac{1}{1-\gamma}$ is the effective horizon. This result implies that our algorithm produces an $\varepsilon$-optimal policy after about $\sprt{B^2 d H + d^2 H^3 + \log \abs{\calA} H^4} / \varepsilon^2$ iterations. Each policy update takes poly($d,H,T$) iterations of regularized dynamic programming, each consisting of poly($d,H,T$) elementary operations. This is to be contrasted with previous contributions on a similar\footnote{We provide a detailed discussion about the differences between our settings in Section~\ref{sec:conc-results}.} setting by \citet*{zhou2021provably}, whose policy updates rely on a version of EVI adapted to linear function approximation. Their EVI variants require globally constraining the model parameters in a way that the model is a valid transition kernel. While this last constraint allowed them to reason about contractive properties of the EVI iterations, it is practically impossible to enforce without making strong assumptions on the feature maps and the MDP itself. The difficulty remains even when the property is only required to hold locally in each state. In this sense, our method is the first to obtain near-optimal statistical rates while also being entirely computationally feasible.

The rest of this paper is organized as follows. After presenting the notation at the end of this section and the background in Section~\ref{sec:prelim}, we introduce our algorithmic framework in Section~\ref{sec:algorithm}. We provide generic performance guarantees and explain the key steps of the analysis in Section~\ref{sec:analysis}. The guarantees are instantiated in the context of tabular and linear mixture MDPs in Section~\ref{sec:applications}. We conclude in Section~\ref{sec:conc} with a discussion of our contribution along with its limitations.

\paragraph{Notation.} For a natural number $N > 0$, we denote $\sbrk{N} = \scbrk{1, 2, \dots, N}$. For a real number $M$, we define the truncation operator $\Pi_M$ that acts on functions $f$ defined on a domain $A$ via $\Pi_M f: x \mapsto \max \sprt{\min \sbrk{f \sprt{x}, M}, 0}$. For a measurable space $(A,\mathcal{F})$, we define the set of all probability distributions $\Delta (A)$, and for any two distributions $P, Q \in \Delta(A)$ such that $P \ll Q$, we define the relative entropy as $\KL \sprt{P \| Q} = \bbE_{a \sim P} \sbrk{\ln \sprt{\frac{\dd P}{\dd Q} (a)}}$. For a distribution $P\in\Delta(A)$ and a bounded function $f\in\real^A$, we write $\iprod{P}{f} = \EEs{f(a)}{a\sim P}$ to denote the expectation of $f$ under $P$, and we will use the same notation for finite-dimensional vector spaces to denote inner products. For a finite-dimensional vector $v\in\real^d$ and a square matrix $Z\in\real^{d\times d}$, we will use the notation $\norm{v}_{Z} = \sqrt{\iprod{v}{Zv}}$.


\section{Preliminaries}\label{sec:prelim}
We consider a discounted MDP $\calM = \sprt{\calX, \calA, r, P, \gamma, \nu_0}$, where $\calX$ is the finite state space\footnote{Our results extend to the case where $\calX$ is a measurable space. The precise definitions require measure-theoretic concepts \cite{bertsekas1996stochastic}. For the sake of readability and because they are well understood, we only consider finite state spaces here.}, $\calA$ is the finite action space, $r: \calX \times \calA \rightarrow \sbrk{0,1}$ is the deterministic reward function assumed to be known\footnote{It is a standard assumption, and removing it only costs a constant factor in the regret \cite{jaksch2010near}.}, $P: \calX \times \calA \rightarrow \Delta \sprt{\calX}$ is the transition probability distribution, $\gamma \in \sprt{0, 1}$ is the discount factor, and $\nu_0 \in \Delta \sprt{\calX}$ is the initial state distribution. The model describes a sequential interaction scheme between a decision-making \emph{agent} and its environment, where the following steps are repeated for a sequence of rounds $t = 1, 2, \dots$ after the initial state is drawn as $X_0 \sim \nu_0$: the agent observes the state $X_t \in \calX$, selects an action $A_t \in \calA$, obtains reward $r \sprt{X_t, A_t}$, and the environment generates the next state $X_{t+1} \sim P \sprt{\cdot \given X_t, A_t}$. The goal of the agent is to pick its sequence of actions in a way that the total discounted return $\sum_{t=0}^\infty \gamma^{t} r \sprt{X_t, A_t}$ is as large as possible.

Below we describe the most fundamental objects relevant to our work, and refer the reader to the classic book of \citet{puterman2014markov} for more context and details. A (stationary) policy is a mapping $\pi: \calX \rightarrow \Delta \sprt{\calA}$ from a state to a probability measure over actions. The value function and action-value function of a policy $\pi$ are respectively defined as the functions $V^\pi \in \real^\calX$ and $Q^\pi \in \real^{\calX \times \calA}$ mapping each state $x$ and state-action pair $x, a$ to
\begin{align*}
    V^\pi(x) &= \bbE_\pi \sbrk{\sum_{t=0}^\infty \gamma^t r \sprt{X_t, A_t} \given X_0 = x},\\
    Q^\pi (x, a) &= \bbE_\pi \sbrk{\sum_{t=0}^\infty \gamma^t r \sprt{X_t, A_t} \given X_0 = x, A_0 = a},
\end{align*}
where $\bbE_\pi$ denotes the expectation with respect to the probability measure $\bbP_\pi$, generated by the interaction between the environment and the policy $\pi$. With some abuse of notation, we define the conditional expectation operator $P: \real^\calX \ra \real^{\calX\times\calA}$ as $\sprt{P f} \sprt{x, a} = \sum_{x' \in \calX} P \sprt{x' \given x, a} f \sprt{x'}$, for $f \in \real^\calX$. Its adjoint $P\transpose$ acts on distributions $\mu \in \Delta(\calX \times \calA)$ as $\sprt{P\transpose \mu} \sprt{x'} = \sum_{x, a \in \calX \times \calA} P \sprt{x' | x, a} \mu \sprt{x, a}$. It returns the state distribution realized after starting from the state-action distribution $\mu$ and then taking a step forward in the MDP dynamics. With these, we can simply state the \emph{Bellman equations} tying together the value functions as
\begin{align*}
    V^\pi \sprt{x} &= \bbE_{a \sim \pi \sprt{\cdot \given x}} \sbrk{Q^\pi \sprt{x, a}},\\ 
    Q^\pi &= r + \gamma P V^\pi.
\end{align*}
We also introduce the operator $E: \real^\calX \ra \real^{\calX \times \calA}$ acting on functions  $f \in \real^\calX$ via the assignment $\sprt{E f} \sprt{x ,a} = f \sprt{x}$, and its adjoint via its action $E\transpose \mu \sprt{x} = \sum_a \mu \sprt{x, a}$ on distributions $\mu \in \Delta(\calX \times \calA)$. The operator $E$ can be thought of as a ``padding'' operator over the action space that allows us to use vector notation, while $E\transpose$ applied to a state-action distribution returns the corresponding marginal distribution of states. The adjoint $P\transpose$ (resp. $E\transpose$) is the operator such that, for any $f, g$, $\inp{P f, g} = \inp{f, P\transpose g}$ (resp. $E$, $E\transpose$).

In a discounted MDP, a policy $\pi$ induces a unique \emph{normalized discounted occupancy measure} over the state space, defined for any state $x \in \calX$ as
\begin{equation*}
    \nu^\pi \sprt{x} = \sprt{1 - \gamma} \sum_{t=0}^\infty \gamma^t \bbP_\pi \sbrk{X_t = x}.
\end{equation*}
The normalization term $\sprt{1 - \gamma}$ guarantees $\nu^\pi$ is a probability measure over $\calX$. We call the inverse of this normalization constant the \emph{effective horizon} and denote it by $H = \frac{1}{1 - \gamma}$. We also define the associated state-action occupancy measure $\mu^\pi$, defined as $\mu^\pi \sprt{x, a} = \nu^\pi \sprt{x} \pi \sprt{a \given x}$. State-action occupancy measures are known to satisfy the following recurrence relation that is sometimes called the system of \emph{Bellman flow constraints}:
\begin{equation}\label{eq:bellman-flow}
    E\transpose \mu^\pi = \gamma P\transpose \mu^\pi + \sprt{1 - \gamma} \nu_0.
\end{equation}
Using the state-action occupancy measure, the discounted return of a policy can be written as $R_\gamma^\pi = \frac{1}{1 - \gamma} \iprod{\mu^\pi}{r}$. We will use $\mu^*$ to denote an occupancy measure with maximal return and $\nu^* = E\transpose \mu^*$ to denote the associated state-occupancy measure. Finally, given two policies $\pi, \pi'$, we denote $\KL \sprt{\pi \| \pi'} = \sprt{\KL \sprt{\pi \sprt{\cdot \given x} \| \pi' \sprt{\cdot \given x}}}_{x \in \calX}$, and we define $\calH \sprt{\pi \| \pi'} = \inp{\nu^\pi, \KL \sprt{\pi \| \pi'}}$, the  conditional relative entropy\footnote{Technically, this is the conditional relative entropy between the \emph{occupancy measures} $\mu^\pi$ and $\mu^{\pi'}$, but we will keep referring to it in terms of the policies to keep our notations light. We refer to \citet{NJG17} for further discussion.}.\looseness=-1

In this paper, we will consider the setting of online learning in discounted MDPs, where the agent aims to produce an $\varepsilon$-optimal policy $\piout$ satisfying $\iprod{\mu^* - \mu^{\piout}}{r} \le \varepsilon$ based on a single stream of experience in the MDP. We will assume that the learner has access to a \reset action that drops the agent back to a state randomly drawn from the initial-state distribution $\nu_0$, and that the learner follows a stationary policy $\pi_t$ in each round $t$. We will measure the performance in terms of the number of samples necessary to guarantee that the output policy is $\varepsilon$-optimal. As an auxiliary performance measure, we will also consider the \emph{expected regret} (or simply, \emph{regret})\footnote{In the related literature, it is more common to define regret as a random variable and bound it with high probability. Our algorithm is only suitable for bounding the expected regret, and thus we only define this quantity here; we defer further discussion to Section~\ref{sec:conc}.} of the learner defined as\looseness=-1
\begin{equation*}
    \mathfrak{R}_T = \EE{\sum_{t=1}^T \bpa{\iprod{\mu^* - \mu^{\pi_t}}{r}}}.
\end{equation*}
It is easy to see that a regret bound can be converted into sample complexity guarantees. In particular, selecting a time index $I$ uniformly at random from $1,\dots,T$ and returning $\piout = \pi_I$ guarantees that
\begin{equation*}
    \bbE \bigl[ \iprod{\mu^* - \mu^{\piout}}{r} \bigr] = \frac{\mathfrak{R}_T}{T},
\end{equation*}
which can be made arbitrarily small if $\mathfrak{R}_T$ grows sublinearly and $T$ is set large enough. We note here that, while superficially similar to the discounted regret criterion considered in earlier works like \citet{LS20,HZG21} or \citet{zhou2021provably}, there are some major differences between our objectives. We only point out here that we consider the complexity of producing a good policy to execute from the initial state distribution, whereas theirs measures the suboptimality of the policies along the trajectory traversed by the learner. We defer a further discussion of the two settings to Section~\ref{sec:conc-results}.


\section{Algorithm}\label{sec:algorithm}
Our approach implements the principle of optimism in the face of uncertainty in discounted MDPs. Instead of aiming to solve an optimistic version of the Bellman optimality equations via extended value iteration as done by \citet{jaksch2010near}, our method draws on techniques from convex optimization aiming at \emph{average policy improvement}. In particular, our planning procedure is based on a \emph{regularized} version of approximate value iteration and incorporates an \emph{optimistic} estimate of the associated Bellman operator. Consequently, we refer to our algorithm as \MainAlg, standing for Regularized Approximate Value Iteration with Upper Confidence Bounds.

\MainAlg performs a sequence of regularized Q-function and policy updates as follows. Starting with an initial estimate $V_0 = 0$ and an initial policy $\pi_0$, it calculates a sequence of updates for $k = 1, \dots, K$ as
\begin{align*}
    Q_{k+1} (x, a) &= \Pi_H \!\sbrk{r (x, a) + \CB_k (x, a) + \gamma \bpa{\hP V_k} (x, a)}, \\
    V_{k+1} (x) &= \frac 1\eta \log \sprt{\sum_a \pi_k (a | x) e^{\eta Q_{k+1} (x, a)}}, \\
    \pi_{k+1} (a | x) &= \frac{\pi_k (a|x) e^{\eta Q_{k+1} (x, a)}}{\sum_{a'} \pi_k \sprt{a'|x} e^{\eta Q_{k+1} (x, a')}}.
\end{align*}
Here, $\hP$ is a nominal transition model and $\CB_k$ is an exploration bonus defined to be large enough to ensure that $\gamma \hP V_k + \CB_k \ge \gamma P V_k$ and so that $Q_{k+1}$ is an upper bound on the regularized Bellman update $r + \gamma P V_k$. The Q-functions are truncated to the range $\sbrk{0, H}$ to make sure that the optimistic property above can be ensured by setting a reasonably sized exploration bonus $\CB_k$. It is important to note that $Q_{k + 1}$ does not directly attempt to approximate the optimal action-value function $Q^*$ in the true MDP, which marks a clear departure from previously known optimism-based regret analyses. Instead, our analysis will show that $\sprt{1 - \gamma} \inp{\nu_0, V_k}$ acts as an optimistic estimate of the optimal return $\sprt{1 - \gamma} \inp{\nu_0, V^*}$ in an ``average'' sense, and that the total reward of our algorithm can also be bounded in terms of the same quantity.

The overall procedure is presented as Algorithm~\ref{alg:ravi-ucb}. The algorithm proceeds in a sequence of \emph{epochs} $k = 1, 2, \dots$, where a new epoch is started by taking the \reset action with probability $1 - \gamma$ in each round, which results in epochs of average length $H = \frac{1}{1 - \gamma}$. At the beginning of each epoch, we update the model estimate $\wh{P}_k$ and perform one step of online mirror descent to produce the new policy $\pi_k$ and the associated softmax value function $V_k$. We then update the exploration bonuses $\CB_k$ such that they satisfy, for all $x, a$
\begin{equation}\label{eq:valid_CB}
    \babs{\biprod{\gamma P \sprt{\cdot \given x, a} - \gamma \hP_k \sprt{\cdot \given x, a}}{V_k}} \le \CB_k \sprt{x, a}.
\end{equation}
We will refer to exploration bonuses satisfying the above condition as \emph{valid}. As we will see explicitly in Section~\ref{sec:applications}, the model estimate and bonuses are computed using the data gathered so far, $\calD_{T_k}$, where $T_k$ denotes the first time index of epoch $k$. Finally, we apply an optimistic Bellman update to produce a state-action value estimate $Q_{k + 1}$.

We highlight that the assignments in Algorithm~\ref{alg:ravi-ucb} are only made symbolically for all $x, a$, and a practical implementation will not necessarily need to loop over the entire state-action space. Rather, all quantities of interest can be computed on demand while executing the policy in runtime.

\begin{algorithm}[t]
    \caption{\MainAlg.}
    \begin{algorithmic}
    \label{alg:ravi-ucb}
        \STATE {\bfseries Inputs:} Horizon $T$, learning rate $\eta > 0$, initial value $V_0$, initial policy $\pi_0$.
        \STATE {\bfseries Initialize:} $t = 1$, $Q_1 = E V_0$, $\calD_1 = \emptyset$.
        \FOR{$k = 1, \dots$}
        \STATE $T_k = t$.
        \STATE $\wh{P}_{k} = \texttt{TRANSITION-ESTIMATE} \sprt{\calD_{T_k}}$.
        \STATE $V_k \sprt{x} = \frac1\eta \log \sprt{\sum_a \pi_{k - 1} \sprt{a \given x} e^{\eta Q_k \sprt{x, a}}}$.
        \STATE $\pi_k \sprt{a \given x} = \pi_{k - 1} \sprt{a \given x} e^{\eta \sprt{Q_k \sprt{x, a} - V_k \sprt{x}}}$.
        \STATE $\CB_k = \texttt{BONUS} \sprt{\calD_{T_k}}$.
        \STATE $Q_{k + 1} = \Pi_H\! \sbrk{r + \CB_k + \gamma \wh{P}_k V_k}$.
        \REPEAT
        \STATE Play $a_t \sim \pi_k \sprt{\cdot \given x_t}$.
        \STATE Observe $x_{t + 1}$.
        \STATE Update $\calD_{t + 1} = \texttt{ADD} \sprt{\calD_t, \scbrk{\sprt{x_t, a_t, x_{t + 1}}}}$.
        \STATE $t = t + 1$.
        \STATE With probability $1 - \gamma$, \reset to initial distribution: $x_t \sim \nu_0$ and \textbf{break}.
        \UNTIL{$t = T$}
        \ENDFOR
    \end{algorithmic}
\end{algorithm}

Finally, to make some of the arguments in Section~\ref{sec:analysis} more convenient to state, we introduce some notation. We let $\calT_k = \ev{T_k, T_k + 1, \dots, T_{k + 1} - 1}$ denote the time indices belonging to epoch $k$, and $K \sprt{T}$ denote the total number of epochs. For the sake of analysis, we pad out the trajectory of states and actions with the artificial observations $\sprt{X_{T + 1}, A_{T + 1}, \dots, X_{T^+}, A_{T^+}}$, where $T^+$ is the first time that a reset would have occurred had the algorithm been executed beyond time step $T$. These observations are well-defined random variables, and are introduced to make sure that the last epoch does not require special treatment.


\section{Main Result \& Analysis}
\label{sec:analysis}

Our main technical result regarding the performance of \MainAlg is the following regret bound.

\begin{theorem}
    \label{thm:main}
    Let $\scbrk{\pi_k}_k$ and $\scbrk{\CB_k}_k$ be the policies and exploration bonuses produced by \MainAlg over $T$ timesteps, where the input is $\eta = \sqrt{2 \log \abs{\calA} / \sprt{H^2 T}}$, $V_0 = 0$ and any policy $\pi_0$. Suppose that the sequence of bonuses $\scbrk{\CB_k}_k$ is valid in the sense of Equation~\eqref{eq:valid_CB}. Then the policies $\scbrk{\pi_k}_k$ satisfy the following bound:
    \begin{align*}
        \mathfrak{R}_T &\le 2 \bbE \sbrk{\sum_{t = 1}^{T^+} \CB_t \sprt{X_t, A_t}}  + \sqrt{2 H^4 \log \abs{\calA} T} + 2 H^2.
    \end{align*}
\end{theorem}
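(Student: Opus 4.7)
The plan is to reduce the regret to a sum of per-epoch errors and bound each one by combining optimism, mirror-descent identities, and the Bellman flow equation. Since each epoch $k$ has expected length $H = \frac{1}{1-\gamma}$ (from the geometric reset rule) and $\pi_k$ is fixed throughout it, conditioning on the history at $T_k$ and padding the trajectory out to $T^+$ yields
\begin{equation*}
    \mathfrak{R}_T \leq \EE{\sum_{k=1}^{K(T)} \iprod{\nu_0}{V^* - V^{\pi_k}}} + O(H) = H\, \EE{\sum_k \iprod{\mu^* - \mu^{\pi_k}}{r}} + O(H),
\end{equation*}
and the same identity gives $\EE{\sum_{t=1}^{T^+} \CB_t(X_t,A_t)} = H\, \EE{\sum_k \iprod{\mu^{\pi_k}}{\CB_k}}$. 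It therefore suffices to bound $\iprod{\mu^* - \mu^{\pi_k}}{r}$ per epoch and sum.

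For each epoch, validity of $\CB_{k-1}$ gives the sandwich $r + \gamma P V_{k-1} \leq Q_k \leq r + 2\CB_{k-1} + \gamma P V_{k-1}$, with the lower bound surviving the truncation $\Pi_H$ because an easy induction shows $\|V_k\|_\infty \leq H$, so that $r + \gamma P V_{k-1} \leq H$. Evaluating the sandwich under $\mu^*$ and $\mu^{\pi_k}$ and applying Equation~\eqref{eq:bellman-flow} to cancel the common $(1-\gamma)\iprod{\nu_0}{V_{k-1}}$ term produces
\begin{equation*}
    \iprod{\mu^* - \mu^{\pi_k}}{r} \leq \iprod{\mu^* - \mu^{\pi_k}}{Q_k} - \iprod{\nu^* - \nu^{\pi_k}}{V_{k-1}} + 2\iprod{\mu^{\pi_k}}{\CB_{k-1}}.
\end{equation*}
To rewrite the $Q_k$-difference I would exploit the softmax structure: the exact mirror-descent identity $\iprod{\mu^*}{Q_k} = \iprod{\nu^*}{V_k} + \frac{1}{\eta}\bpa{\calH(\pi^*\|\pi_{k-1}) - \calH(\pi^*\|\pi_k)}$ holds by construction of $\pi_k$, and the pointwise log-partition equality $\iprod{\pi_k(\cdot|x)}{Q_k(x,\cdot)} = V_k(x) + \frac{1}{\eta}\KL(\pi_k(\cdot|x)\|\pi_{k-1}(\cdot|x)) \geq V_k(x)$ yields $\iprod{\mu^{\pi_k}}{Q_k} \geq \iprod{\nu^{\pi_k}}{V_k}$. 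Substituting produces the clean per-epoch inequality
\begin{equation*}
    \iprod{\mu^* - \mu^{\pi_k}}{r} \leq \iprod{\nu^* - \nu^{\pi_k}}{V_k - V_{k-1}} + \tfrac{1}{\eta}\bpa{\calH(\pi^*\|\pi_{k-1}) - \calH(\pi^*\|\pi_k)} + 2\iprod{\mu^{\pi_k}}{\CB_{k-1}}.
\end{equation*}

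Summing over $k$, the KL pieces telescope to at most $\log|\calA|/\eta$, the $\iprod{\nu^*}{V_k - V_{k-1}}$ pieces telescope to at most $H$, and the main obstacle is the non-telescoping $\sum_k \iprod{\nu^{\pi_k}}{V_k - V_{k-1}}$. I would handle it by summation by parts, obtaining boundary terms of order $H$ plus a drift $\sum_k \iprod{\nu^{\pi_k} - \nu^{\pi_{k+1}}}{V_k}$ that one controls via the standard estimate $\|\nu^{\pi_k} - \nu^{\pi_{k+1}}\|_1 \leq H \iprod{\nu^{\pi_k}}{\|\pi_{k+1} - \pi_k\|_1}$, Pinsker's inequality, and the one-step stability bound $\KL(\pi_{k+1}\|\pi_k) \leq 2\eta^2 H^2$ (which follows from the closed-form exponential-weights update together with $\|Q_{k+1} - V_{k+1}\|_\infty \leq H$). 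This yields a drift contribution of order $\eta H^3 K \leq \eta H^2 T$. Multiplying the per-epoch bound by $H$ and plugging in $\eta = \sqrt{2\log|\calA|/(H^2 T)}$ balances $\frac{H\log|\calA|}{\eta}$ against $H \cdot \eta H^2 T$ at the advertised order $\sqrt{H^4 T \log|\calA|}$, while the bonus sum $2H \EE{\sum_k \iprod{\mu^{\pi_k}}{\CB_{k-1}}}$ matches $2\EE{\sum_t \CB_t(X_t,A_t)}$ up to a minor index shift that contributes another $O(\sqrt{H^4 T\log|\calA|})$ via the same policy-drift argument. All residual boundary constants, telescoping terms, and the trivial iteration $k=1$ (where $Q_1 = 0$ and epoch regret is at most $H$) collect into the additive $2H^2$.
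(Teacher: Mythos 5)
Your proposal follows the same architecture as the paper's proof --- the validity sandwich $r+\gamma PV\le Q\le r+2\CB+\gamma PV$, exact exponential-weights identities to telescope the comparator terms, a Pinsker-type control of the slowly-drifting occupancy measures, and the geometric-epoch argument converting $\iprod{\mu^{\pi_k}}{\cdot}$ into realized trajectory sums --- but with a shifted pairing: you evaluate the sandwich at $Q_k$ so that the identity for $\pi_k\propto\pi_{k-1}e^{\eta Q_k}$ applies exactly, whereas the paper pairs $\pi_k$ with $Q_{k+1}$ and pays through the term $\iprod{\mu^{\pi_{k+1}}-\mu^{\pi_k}}{Q_{k+1}}$, controlled via Pinsker, Lemma~\ref{lem:kl-to-centropy}, and Fenchel--Young. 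Your trade is a non-telescoping $\sum_k\iprod{\nu^{\pi_k}}{V_k-V_{k-1}}$ handled by Abel summation plus the stability bound $\KL(\pi_{k+1}\|\pi_k)\le 2\eta^2H^2$; that part is sound and gives the right order, although your cruder drift constants will not reproduce the exact $\sqrt{2H^4\log\abs{\calA}T}+2H^2$ of the statement, only the same rate up to absolute constants.

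The genuine gap is the bonus index shift your pairing forces. Your per-epoch bound produces $2\iprod{\mu^{\pi_k}}{\CB_{k-1}}$, but the theorem charges the epoch-$k$ trajectory with the epoch-$k$ bonus ($\CB_t=\CB_{k_t}$), and Lemma~\ref{lem:mixing} converts exactly the matched quantity $\iprod{\mu^{\pi_k}}{\CB_k}$ into $\EE{\sum_{t\in\calT_k}\CB_t(X_t,A_t)}$. You propose to absorb the mismatch ``via the same policy-drift argument,'' but that bounds $\iprod{\mu^{\pi_k}-\mu^{\pi_{k-1}}}{\CB_{k-1}}$ by $\infnorm{\CB_{k-1}}\cdot\norm{\mu^{\pi_k}-\mu^{\pi_{k-1}}}_1$, and the theorem's hypotheses give no upper bound on $\infnorm{\CB_{k-1}}$ --- validity only bounds the bonuses from below. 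Even in the linear mixture instantiation, where $\infnorm{\CB_{k-1}}$ can reach order $\beta BH$, the resulting correction is of order $\eta H^{3/2}T\cdot\beta BH\approx BH^{5/2}\sqrt{dT\log\abs{\calA}}$, which is not dominated by the terms of Theorem~\ref{thm:regret-mixture-mdp}. To close this, either keep the paper's pairing (sandwich $Q_{k+1}$ with bonus $\CB_k$; the resulting drift $\iprod{\mu^{\pi_{k+1}}-\mu^{\pi_k}}{Q_{k+1}}$ is benign precisely because $\infnorm{Q_{k+1}}\le H$ by the truncation), or restate the conclusion in terms of the lagged bonuses $\CB_{k_t-1}(X_t,A_t)$ and reprove the potential lemmas with lagged covariance matrices. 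Your treatment of $k=1$ (where $Q_1=EV_0=0$ is not produced by the bonus formula, so the sandwich fails) is fine as an $O(H)$ correction.
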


We present the proof of Theorem~\ref{thm:main} below. In particular, we state a sequence of lemmas whose combination will yield the complete proof. We will provide the proofs that we believe to be most insightful in the main text, and relegate the more technical ones to Appendix~\ref{apx:main-proofs}.

The analysis will be split into two main parts: one pertaining to the general properties of our optimistic planning procedure and to the eventual regret bound that can be derived from it, and one concerning the specifics of the setting considered. In particular, we first analyze \MainAlg using a generic exploration bonus that we will suppose to be ``valid'', and then show in Section~\ref{sec:applications} how to derive such valid exploration bonuses in the concrete settings of tabular MDPs and linear mixture MDPs.

\subsection{Optimistic Planning}

We first study the properties of our optimistic planning procedure, without making explicit references to the setting. For this general analysis, we will fix an epoch index $k$, assume that $\hP_k$ is some estimator of the transition kernel $P$ and that the exploration bonus $\CB_k$ is valid in the sense of Equation~\eqref{eq:valid_CB}. We provide the following inequality that will be useful for bounding the suboptimality gaps.

\begin{lemma}
    \label{lem:bound-q-iterates}
    Let $Q_{k + 1}$ be the state-action value estimate produced by \MainAlg in epoch $k$, with any input, and assume the bonuses $\CB_k$ are valid in the sense of Equation~\eqref{eq:valid_CB}. Then,
    \begin{align*}
        r + \gamma P V_k \leq Q_{k + 1} \leq r + 2 \CB_k + \gamma P V_k,
    \end{align*}
    where $V_k$ is the value estimate defined in Algorithm~\ref{alg:ravi-ucb}.
\end{lemma}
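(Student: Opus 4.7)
The strategy is to separate the claim into two parts: first, prove the two-sided inequality for the \emph{pre-truncation} quantity $U_{k+1}(x,a) = r(x,a) + \CB_k(x,a) + \gamma(\hP_k V_k)(x,a)$; second, verify that both inequalities are preserved when the truncation operator $\Pi_H$ is applied. The first part is immediate from the validity assumption~\eqref{eq:valid_CB}, which, when unpacked, gives the two inequalities $\gamma P V_k \leq \gamma \hP_k V_k + \CB_k$ and $\gamma \hP_k V_k + \CB_k \leq \gamma P V_k + 2\CB_k$. Adding $r$ to both sides of each yields $r + \gamma P V_k \leq U_{k+1} \leq r + 2\CB_k + \gamma P V_k$ pointwise.

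The delicate step is propagating these through $\Pi_H$. For the upper bound, note that the pre-truncation quantity satisfies $U_{k+1} \geq r + \gamma P V_k \geq 0$, so truncation from below at $0$ is inactive; then $\Pi_H[U_{k+1}] \leq U_{k+1} \leq r + 2\CB_k + \gamma P V_k$ is immediate. For the lower bound, I need to argue that $r + \gamma P V_k$ is dominated by $\min(U_{k+1}, H)$, which in turn requires the a priori bound $r + \gamma P V_k \leq H$ pointwise.

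This last inequality is where the main (minor) obstacle lies: I need to verify $V_k \in [0, H]$. I would do this by induction on $k$, using the explicit log-sum-exp definition of $V_k$ from Algorithm~\ref{alg:ravi-ucb} together with the fact that $Q_k$ is, by construction, the output of $\Pi_H$ and thus takes values in $[0, H]$. Since the log-sum-exp of quantities bounded in $[0, \eta H]$ (weighted by a probability distribution) lies in $[0, \eta H]$, dividing by $\eta$ shows $V_k(x) \in [0, H]$ for every $x$. Combined with $r \in [0,1]$ and $\gamma = 1 - 1/H$, this gives $r + \gamma P V_k \leq 1 + \gamma H = H$.

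With that in hand, the lower bound follows: since $0 \leq r + \gamma P V_k \leq U_{k+1}$ and $r + \gamma P V_k \leq H$, we have
\begin{equation*}
    Q_{k+1} = \Pi_H[U_{k+1}] = \min(\max(U_{k+1}, 0), H) \geq \min(U_{k+1}, H) \geq r + \gamma P V_k,
\end{equation*}
completing the proof. The only nontrivial ingredient beyond bookkeeping around $\Pi_H$ is the inductive bound on $V_k$; everything else is direct from the validity hypothesis.
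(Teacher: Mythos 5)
Your proof is correct and follows essentially the same route as the paper's: both reduce to the pre-truncation sandwich $r + \gamma P V_k \le r + \CB_k + \gamma \hP_k V_k \le r + 2\CB_k + \gamma P V_k$ obtained from the validity condition, and then check that $\Pi_H$ preserves it using $0 \le r + \gamma P V_k \le 1 + \gamma H = H$ — the paper organizes this as a case analysis on which side of the truncation is active at each $(x,a)$, while you phrase it as the truncation being compatible with the sandwich, which is only a cosmetic difference. Your explicit verification that $V_k \in [0,H]$ via the log-sum-exp of $Q_k \in [0,H]$ is a detail the paper relies on but states only in passing.
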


\begin{proof}
    We start by proving the lower-bound. For each state-action pair $\sprt{x, a}$, we need to handle two separate cases corresponding to whether or not $Q_{k + 1} \sprt{x, a}$ is truncated from above. In the first case, we have $Q_{k + 1} \sprt{x, a} = H$, which implies
    \begin{equation}\label{eq:truncation_is_important}
        Q_{k + 1} \sprt{x, a} = H = 1 + \gamma H \ge r \sprt{x, a} + \gamma \bpa{P V_k} \sprt{x, a}.
    \end{equation}
    Here, we have crucially used the condition $V_k \le H$ in the inequality, which was made possible by truncating the Q-functions to the range $\sbrk{0, H}$. In the other case where $Q_{k + 1} \sprt{x, a} \le H$, we use the validity of $\CB_k$ to show the following inequality:
    \begin{align*}
        Q_{k + 1} \sprt{x, a} &\ge r \sprt{x, a} + \CB_k \sprt{x, a} + \gamma \bpa{\hP_k V_k} \sprt{x, a} \\
        &\ge r \sprt{x, a} + \gamma \sprt{P V_k} \sprt{x, a},
    \end{align*}
    where the first inequality is valid even when a truncation from below happens.
    
    For the upper-bound, we proceed similarly and consider the two cases corresponding to whether or not $Q_{k + 1} \sprt{x, a}$ is truncated from below in each state-action pair. First considering the case where $Q_{k + 1} \sprt{x, a} = 0$, we observe that
    \begin{align*}
        Q_{k + 1} \sprt{x, a} = 0 \le r \sprt{x, a} + \gamma \pa{P V_k} \sprt{x, a},
    \end{align*}
    from which the claim follows due to non-negativity of $\CB_k$. As for the other case, we have
    \begin{align*}
        Q_{k + 1} \sprt{x, a} &\leq r \sprt{x, a} + \CB_k \sprt{x, a} + \gamma \bpa{\hP_k V_k} \sprt{x, a}
        \\
        &\le r \sprt{x, a} + 2 \CB_k \sprt{x, a} + \gamma \bpa{P V_k} \sprt{x, a},
    \end{align*}
    where the last step follows from the validity condition on $\CB_k$.
\end{proof}

Our key result regarding the quality of the policies produced by \PlanningAlg is the following.

\begin{lemma}
    \label{lem:optimistic-planning}
    Let $K$ be a fixed number of epochs, and let $\pi_k$ and $\CB_k$ be the policy and exploration bonus produced by \MainAlg in epoch $k$, where the input is $V_0 = 0$, any policy $\pi_0$, and any $\eta > 0$. Suppose that $\scbrk{\CB_k}_k$ is a sequence of valid exploration bonuses in the sense of Equation~\eqref{eq:valid_CB}. Then, the sequence $\scbrk{\pi_k}_k$ satisfies the following bound:
        \begin{align*}
            \sum_{k = 1}^K \sprt{\inp{\mu^*, r} - \inp{\mu^{\pi_k}, r}} \leq&\ 2 \sum_{k = 1}^K \inp{\mu^{\pi_k}, \CB_k} + 2 H \\
            &+ \frac1\eta \calH \sprt{\pi^* \| \pi_0} + \frac{\eta H^3 K}{2}.
        \end{align*}
    \end{lemma}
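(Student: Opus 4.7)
My plan is to bound each per-epoch gap $\iprod{\mu^*}{r} - \iprod{\mu^{\pi_k}}{r}$ through the intermediate quantity $(1-\gamma)\iprod{\nu_0}{V_{k+1}}$, then sum and telescope. On the $\pi^*$-side, the exponential-weights update yields the algebraic identity $\iprod{\pi^*(\cdot|x)}{Q_{k+1}(x,\cdot)} - V_{k+1}(x) = \frac{1}{\eta}\bpa{\KL(\pi^*\|\pi_k)(x) - \KL(\pi^*\|\pi_{k+1})(x)}$, obtained directly from $\log(\pi_{k+1}/\pi_k) = \eta(Q_{k+1} - V_{k+1})$. Combining it with the lower bound $Q_{k+1} \geq r + \gamma P V_k$ from Lemma~\ref{lem:bound-q-iterates}, averaging against $\nu^*$, and using Bellman flow for $\pi^*$ to substitute $\iprod{\nu^*}{V_{k+1}} = (1-\gamma)\iprod{\nu_0}{V_{k+1}} + \gamma\iprod{\mu^*}{PV_{k+1}}$, I obtain
\[
    \iprod{\mu^*}{r} - (1-\gamma)\iprod{\nu_0}{V_{k+1}} \leq \gamma\iprod{\mu^*}{P(V_{k+1} - V_k)} + \tfrac{1}{\eta}\bpa{\calH(\pi^*\|\pi_k) - \calH(\pi^*\|\pi_{k+1})}.
\]

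On the $\pi_k$-side, Hoeffding's lemma applied to the bounded gains $Q_{k+1}(x,\cdot) \in [0, H]$ under $\pi_k(\cdot|x)$ gives the log-sum-exp bound $V_{k+1}(x) \leq \iprod{\pi_k(\cdot|x)}{Q_{k+1}(x,\cdot)} + \tfrac{\eta H^2}{8}$. Combined with Bellman flow for $\pi_k$ tested against $V_{k+1}$ and the upper bound $Q_{k+1} \leq r + 2\CB_k + \gamma P V_k$ from Lemma~\ref{lem:bound-q-iterates}, I obtain
\[
    (1-\gamma)\iprod{\nu_0}{V_{k+1}} - \iprod{\mu^{\pi_k}}{r} \leq 2\iprod{\mu^{\pi_k}}{\CB_k} + \gamma\iprod{\mu^{\pi_k}}{P(V_k - V_{k+1})} + \tfrac{\eta H^2}{8}.
\]
Adding the two per-epoch inequalities cancels the intermediate $(1-\gamma)\iprod{\nu_0}{V_{k+1}}$, and the remaining drift terms combine into $\gamma\iprod{\mu^* - \mu^{\pi_k}}{P(V_{k+1} - V_k)}$, which by the Bellman-flow identity $\gamma\iprod{\mu^* - \mu^{\pi_k}}{PV} = \iprod{\nu^* - \nu^{\pi_k}}{V}$ simplifies to $\iprod{\nu^* - \nu^{\pi_k}}{V_{k+1} - V_k}$.

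Summing over the relevant epoch indices (with careful boundary handling using $V_0 = 0$ and $\pi_1 = \pi_0$): the KL divergences telescope to $\tfrac{1}{\eta}\calH(\pi^*\|\pi_0)$; the $\iprod{\nu^*}{V_{k+1} - V_k}$ part telescopes to $\iprod{\nu^*}{V_K} \leq H$; and the Hoeffding corrections accumulate to $\tfrac{\eta H^2 K}{8}$. A minor index shift between the range in which my derivation is valid (excluding $k = 0$, where the input $Q_1$ is not produced optimistically) and the lemma's indices $\{1, \dots, K\}$, together with the bounded-reward terms at the boundary, contributes at most a constant multiple of $H$, giving the $2H$ slack in the stated bound.

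The principal obstacle is the residual $-\sum_{k} \iprod{\nu^{\pi_k}}{V_{k+1} - V_k}$, which does not telescope because the state-occupancies vary across epochs. My plan is to Abel-sum it into $\sum_k \iprod{\nu^{\pi_k} - \nu^{\pi_{k-1}}}{V_k}$ plus $O(H)$ boundary terms, bound each summand by $H\|\nu^{\pi_k} - \nu^{\pi_{k-1}}\|_1$, and apply a simulation-lemma estimate $\|\nu^{\pi_k} - \nu^{\pi_{k-1}}\|_1 \leq \gamma H \sup_x \|\pi_k(\cdot|x) - \pi_{k-1}(\cdot|x)\|_1$. The policy drift is controlled by a bootstrap: Jensen gives $V_k(x) \geq \iprod{\pi_{k-1}(\cdot|x)}{Q_k(x,\cdot)}$, so the identity $\KL(\pi_k\|\pi_{k-1})(x) = \eta\bpa{\iprod{\pi_k}{Q_k}(x) - V_k(x)}$ implies $\KL(\pi_k\|\pi_{k-1})(x) \leq \eta H \|\pi_k(\cdot|x) - \pi_{k-1}(\cdot|x)\|_1$; combined with Pinsker's inequality this yields $\|\pi_k(\cdot|x) - \pi_{k-1}(\cdot|x)\|_1 \leq 2\eta H$. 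Hence each occupancy drift is $O(\eta H^2)$, the total residual is $O(\eta H^3 K)$, and combining with the Hoeffding contribution matches the stated $\tfrac{\eta H^3 K}{2}$ rate.
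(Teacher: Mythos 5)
Your argument is sound in all of its individual steps, but it takes a genuinely different route from the paper in its second half, and as executed it does not recover the constant in the stated bound. The paper's proof splits the gap through $(1-\gamma)\iprod{\nu_0}{V_k}$ and, on the $\pi_k$-side, uses the exact mirror-descent identity $\iprod{\mu^{\pi_{k+1}}}{Q_{k+1}} = \bigl\langle E\transpose\mu^{\pi_{k+1}}, V_{k+1}+\tfrac1\eta \KL(\pi_{k+1}\|\pi_k)\bigr\rangle$, so that the value terms telescope in the pair $(\mu^{\pi_k},V_k)\to(\mu^{\pi_{k+1}},V_{k+1})$ and the only residual is $\iprod{\mu^{\pi_{k+1}}-\mu^{\pi_k}}{Q_{k+1}}$, which is cancelled against the retained $-\tfrac1\eta\calH(\pi_{k+1}\|\pi_k)$ term via Pinsker, the chain rule for relative entropy (Lemma~\ref{lem:kl-to-centropy}), and Fenchel--Young, yielding exactly $\tfrac{\eta H^3}{2}$ per epoch. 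You instead discard that negative entropy term by invoking Hoeffding's lemma to relate $V_{k+1}$ to the expectation under the \emph{old} policy $\pi_k$ (cost $\tfrac{\eta H^2}{8}$ per epoch), which leaves the non-telescoping residual $-\sum_k\iprod{\nu^{\pi_k}}{V_{k+1}-V_k}$; your Abel summation, the occupancy-perturbation bound $\|\nu^{\pi_k}-\nu^{\pi_{k-1}}\|_1\le\gamma H\sup_x\|\pi_k(\cdot|x)-\pi_{k-1}(\cdot|x)\|_1$, and the Pinsker bootstrap $\|\pi_k(\cdot|x)-\pi_{k-1}(\cdot|x)\|_1\le 2\eta H$ are all correct. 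The trade-off is that the paper's cancellation is free of such a bootstrap and gives the sharp per-epoch constant, whereas your route is arguably more transparent about where the ``slow policy change'' enters.

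The concrete shortfall: chaining your stated constants gives a per-epoch residual of $H\cdot\gamma H\cdot 2\eta H = 2\gamma\eta H^3$, so the summed optimization error is roughly $2\eta H^3 K+\tfrac{\eta H^2 K}{8}$ rather than the lemma's $\tfrac{\eta H^3 K}{2}$ --- about a factor of $4$ too large (centering $V_k$ and $Q_k$ at $H/2$ in the H\"older and bootstrap steps improves this to roughly $\eta H^3 K$, still a factor of $2$ short). This only affects constants downstream in Theorem~\ref{thm:main}, so your proof establishes the lemma up to a universal constant on the $\eta H^3 K$ term, but not the statement as written. The boundary accounting is fine: $\sum_k\iprod{\nu^*}{V_{k+1}-V_k}=\iprod{\nu^*}{V_{K+1}}\le H$ and the Abel boundary $\iprod{\nu^{\pi_1}}{V_1}-\iprod{\nu^{\pi_K}}{V_{K+1}}\le 0$ together fit inside the $2H$ budget, and the KL telescope correctly lands on $\tfrac1\eta\calH(\pi^*\|\pi_0)$ since $\pi_1=\pi_0$.
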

\begin{proof}
    The main idea of the proof is to show that, under the validity condition of the exploration bonuses, $\sprt{1 - \gamma} \inp{\nu_0, V_k}$ acts as an approximate upper bound on the optimal return $\inp{\mu^*, r}$, up to some additional terms resulting from the use of incremental updates. Thanks to the use of regularization, we can show that these additional terms are small on average, and that the gap between the optimistic value and the return of $\pi_k$ can be bounded in terms of $\inp{\mu^{\pi_k}, \CB_k}$. With this in mind, we begin by rewriting the performance gap of the output policy as follows:
    \begin{equation*}
        \sum_{k = 1}^K \sprt{\inp{\mu^*, r} - \inp{\mu^{\pi_k}, r}} = \sum_{k = 1}^K \sprt{\Delta_k^* + \Delta_k},
    \end{equation*}
    where we defined $\Delta_k^* = \inp{\mu^*, r} - \sprt{1 - \gamma} \inp{\nu_0, V_k}$ and $\Delta_k = \sprt{1 - \gamma} \inp{\nu_0, V_k} - \inp{\mu^{\pi_k}, r}$ for all $k$.

    Let us now fix some $k$ and consider the first term, $\Delta_k^*$. We start by observing that $\sprt{1 - \gamma} \nu_0 = E\transpose \mu^* - \gamma P\transpose \mu^*$, which allows us to write
    \begin{equation}\label{eq:delta1}
    \begin{split}
        \Delta_k^* &= \iprod{\mu^*}{r} - \sprt{1 - \gamma} \inp{\nu_0, V_k} \\
        &= \inp{\mu^*, r + \gamma P V_k} - \inp{\mu^*, E V_k}.
    \end{split}
    \end{equation}
    In order to treat the first term in Equation~\eqref{eq:delta1}, we use the lower-bound from Lemma~\ref{lem:bound-q-iterates} to obtain
    \begin{align*}
        \Delta_k^* &\leq \inp{\mu^*, Q_{k + 1} - E V_k} \\
        &= \inp{\mu^*, Q_{k + 1} - E V_{k + 1}} + \inp{\mu^*, E V_{k + 1} - E V_k}.
    \end{align*}
    Summing up for all $k = 1, \dots, K$, we get
    \begin{align*}
        \sum_{k = 1}^K \Delta_k^* &\leq \inp{\mu^*, \overline{Q}_{K + 1} - E\overline{V}_{K + 1}} \\
        &\qquad + \inp{\mu^*, E \pa{V_{K + 1} - V_1}},
    \end{align*}
    where we defined $\overline{Q}_k = \sum_{i = 1}^k Q_i$ and $\overline{V}_k = \sum_{i = 1}^k V_i$ for any $k$. By a classic telescoping argument (presented in Lemma~\ref{lem:md-kl-update}), one can show that, for all $k$,
    \begin{align*}
        &\overline{V}_k \sprt{x} = \max_{p \in \Delta \sprt{\calA}} \scbrk{\iprod{p}{\overline{Q}_k \sprt{x, \cdot}} - \frac1\eta \KL \sprt{p \| \pi_0 \sprt{\cdot \given x}}} \\
        &\qquad \ge \iprod{\pi^* \sprt{\cdot \given x}}{\overline{Q}_k \sprt{x, \cdot}} - \frac1\eta \KL \bpa{\pi^* (\cdot|x)\| \pi_0 (\cdot|x)} .
    \end{align*}
    Combining this with the previous inequality, we get
    \begin{align}
        \sum_{k=1}^K \Delta_k^* &\leq \frac 1\eta \calH \sprt{\pi^* \| \pi_0} + \inp{\mu^*, E V_{K + 1}}, \label{eq:delta1bound}
    \end{align}
    by definition of the conditional entropy and $V_1 = 0$.
    
    We now move on to bounding $\Delta_k$.  Then, using the upper-bound of Lemma~\ref{lem:bound-q-iterates} to lower-bound $r$, we bound $\Delta_k$ as follows:
    \begin{align*}
        \Delta_k &= \sprt{1 - \gamma} \inp{\nu_0, V_k} - \inp{\mu^{\pi_k}, r} \\
        &\leq \sprt{1 - \gamma} \inp{\nu_0, V_k} - \biprod{\mu^{\pi_k}}{Q_{k + 1} - 2 \CB_k - \gamma P V_k} \\
        &= \inp{E\transpose \mu^{\pi_k} - \gamma P\transpose \mu^{\pi_k}, V_k} \\
        &\qquad - \inp{\mu^{\pi_k}, Q_{k + 1} - \gamma P V_k} + 2 \inp{\mu^{\pi_k}, \CB_k},
    \end{align*}
    where we have used $\sprt{1 - \gamma} \nu_0 = E\transpose \mu^{\pi_k} - \gamma P\transpose \mu^{\pi_k}$ in the third line. We can then rewrite the current upper-bound as
    \begin{align*}
        \Delta_k &\leq \inp{\mu^{\pi_k}, E V_k - Q_{k + 1}} + 2 \inp{\mu^{\pi_k}, \CB_k} \\
        &= \inp{\mu^{\pi_k}, E V_k} - \inp{\mu^{\pi_{k + 1}}, Q_{k + 1}} \\
        &\qquad + \inp{\mu^{\pi_{k + 1}} - \mu^{\pi_k}, Q_{k + 1}} + 2 \inp{\mu^{\pi_k}, \CB_k}.
    \end{align*}
    To proceed, we use Lemma~\ref{lem:md-kl-update} to note that
    \begin{equation*}
        \iprod{\mu^{\pi_{k + 1}}}{Q_{k + 1}} = \biprod{E\transpose \mu^{\pi_{k + 1}}}{V_{k + 1} + \frac1\eta \KL \sprt{\pi_{k + 1} \| \pi_k}},
    \end{equation*}
    which allows us to continue as
    \begin{align}
        \Delta_k &\le \inp{\mu^{\pi_k}, E V_k} - \inp{\mu^{\pi_{k + 1}}, E V_{k + 1}} \nonumber \\
        &\quad + \inp{\mu^{\pi_{k + 1}} - \mu^{\pi_k}, Q_{k + 1}} - \frac1\eta \calH \sprt{\pi_{k + 1} \| \pi_k} \label{eq:deltabound_almost_there} \\
        &\quad + 2 \inp{\mu^{\pi_k}, \CB_k} \nonumber.
    \end{align}
    The last remaining difficulty is to control the second difference in the last inequality. This can be done thanks to the regularization, that makes the occupancy measures change ``slowly enough''. To proceed, we use Pinsker's inequality and the boundedness of $Q_{k + 1}$ to show
    \begin{equation*}
        \inp{\mu^{\pi_{k + 1}} - \mu^{\pi_k}, Q_{k + 1}} \leq H \sqrt{2 \KL \sprt{\mu^{\pi_{k + 1}} \| \mu^{\pi_k}}}.
    \end{equation*}
    Appealing to Lemma~\ref{lem:kl-to-centropy}, we can bound the last term as
    \begin{equation*}
        \KL \sprt{\mu^{\pi_{k + 1}} \| \mu^{\pi_k}} \leq H \cdot \calH \sprt{\pi_{k + 1} \| \pi_k}.
    \end{equation*}
    Using these results, we obtain
    \begin{align*}
        &\inp{\mu^{\pi_{k + 1}} - \mu^{\pi_k}, Q_{k + 1}} - \frac1\eta \calH \sprt{\pi_{k + 1} \| \pi_k} \\
        &\qquad \le \sqrt{2 H^3 \calH \sprt{\pi_{k + 1} \| \pi_k}} - \frac1\eta \calH \sprt{\pi_{k + 1} \| \pi_k} \\
        &\qquad \leq \sup_z \scbrk{\sqrt{2 H^3} \cdot z - \frac1\eta z^2} = \frac{\eta H^3}{2},
    \end{align*}
    where the last step follows from the Fenchel--Young inequality applied to the convex function $f \sprt{z} = z^2 / 2$. Then, summing up both sides of Equation~\eqref{eq:deltabound_almost_there} for all $k = 1, \dots, K$,
    \begin{align}
        \sum_{k = 1}^K \Delta_k &\le - \inp{\mu^{\pi_{K + 1}}, E V_{K + 1}} + \frac{\eta H^3}{2} K \nonumber \\
        &\qquad + 2\sum_{k = 1}^K \inp{\mu^{\pi_k}, \CB_k}, \label{eq:delta2bound}
    \end{align}
    where we used $V_1 = 0$. Combining Equations~\eqref{eq:delta1bound} and~\eqref{eq:delta2bound},
    \begin{align*}
        \sum_{k = 1}^K \sprt{\inp{\mu^*, r} - \inp{\mu^{\pi_k}, r}} \leq&\ 2 \sum_{k = 1}^K \inp{\mu^{\pi_k}, \CB_k} + 2 H \\
        &+ \frac1\eta \calH \sprt{\pi^* \| \pi_0} + \frac{\eta H^3}{2} K,
    \end{align*}
    where we used $\inp{\mu^* - \mu^{\pi_{K + 1}}, E V_{K + 1}} \leq 2 H$.
\end{proof}

\subsection{The Epoch Schedule}
The final part is to account for the effects of the randomized epoch schedule. Provided that the exploration bonuses are valid, we need to control the sum $\sum_{t = 1}^T \inp{\mu^{\pi_t}, \CB_t}$. We relate it to a more easily tractable sum in the next lemma.

\begin{lemma}\label{lem:mixing}
The sequence of policies selected by \MainAlg satisfies
\begin{equation*}
    \EE{\sum_{t = 1}^T \iprod{\mu^{\pi_t}}{\CB_t}} \le \EE{\sum_{t = 1}^{T^+} \CB_t \sprt{X_t, A_t}}.
\end{equation*}
\end{lemma}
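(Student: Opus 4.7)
The plan is to exploit two structural features of \MainAlg: both $\pi_t$ and $\CB_t$ are constant within epochs (they equal $\pi_k$ and $\CB_k$ for $t \in \calT_k$), and every epoch starts from a fresh draw $x_{T_k} \sim \nu_0$ and terminates at each step with independent probability $1-\gamma$. Since the summands on the left-hand side are non-negative and the padded horizon satisfies $T^+ \ge T$, I would first pass to complete epochs via
\[
\sum_{t=1}^T \iprod{\mu^{\pi_t}}{\CB_t} \;\le\; \sum_{t=1}^{T^+} \iprod{\mu^{\pi_t}}{\CB_t} \;=\; \sum_{k=1}^{K(T)} \abs{\calT_k}\, \iprod{\mu^{\pi_k}}{\CB_k},
\]
so that the remaining task is to match this expectation, epoch by epoch, with $\EE{\sum_{k=1}^{K(T)} \sum_{t \in \calT_k} \CB_k(X_t, A_t)}$.

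Next I would condition on the $\sigma$-algebra $\mathcal{F}_{T_k}$ generated by everything available at the start of epoch $k$; since $\pi_k$ and $\CB_k$ are built from $\calD_{T_k}$, both are $\mathcal{F}_{T_k}$-measurable. The epoch length $\abs{\calT_k}$ is then a geometric random variable with parameter $1-\gamma$, independent of the trajectory inside the epoch, and in particular $\EEc{\abs{\calT_k}}{\mathcal{F}_{T_k}} = H$, giving
\[
\EEc{\abs{\calT_k}\, \iprod{\mu^{\pi_k}}{\CB_k}}{\mathcal{F}_{T_k}} \;=\; H\, \iprod{\mu^{\pi_k}}{\CB_k}.
\]

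For the right-hand side, conditionally on $\mathcal{F}_{T_k}$ the padded trajectory $(X_{T_k + j}, A_{T_k + j})_{j \ge 0}$ is distributed as a fresh roll-out of $\pi_k$ from $\nu_0$, and the event that the epoch has not yet terminated by step $j$ has probability $\gamma^j$ and is independent of this trajectory. Therefore
\[
\EEc{\sum_{t \in \calT_k} \CB_k(X_t, A_t)}{\mathcal{F}_{T_k}} \;=\; \sum_{j=0}^\infty \gamma^j\, \EEs{\CB_k(X_j, A_j)}{\pi_k} \;=\; H\, \iprod{\mu^{\pi_k}}{\CB_k},
\]
where the last equality is simply the definition of the normalized discounted occupancy $\mu^{\pi_k}$ divided by $(1-\gamma) = 1/H$. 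Summing over $k$ and taking outer expectations yields equality of the two epoch-level sums, which combined with the first display proves the claim.

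The only real subtlety is the bookkeeping between the truncated horizon $T$ and the padded horizon $T^+$: non-negativity of $\CB_k$ makes the initial drop from $T$ to $T^+$ free, after which the proof is just the observation that geometric epoch termination with parameter $1-\gamma$ exactly reproduces the $(1-\gamma)$-normalization in the definition of $\mu^{\pi_k}$. No concentration or contraction arguments are needed; everything follows from the independence of the reset mechanism from the rest of the dynamics.
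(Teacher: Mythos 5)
Your proof is correct and follows essentially the same route as the paper's: pass from $T$ to $T^+$ by non-negativity, condition on the information available at the start of each epoch, and use the fact that the geometric termination (independent of the within-epoch trajectory) turns the epoch sum into the $(1-\gamma)$-discounted sum defining $\mu^{\pi_k}$, so both sides have conditional expectation $H\iprod{\mu^{\pi_k}}{\CB_k}$ per epoch. The paper formalizes the "fresh roll-out" step via an explicit auxiliary sequence coupled to $\F_{k-1}$, but the argument is the same.
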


The proof is in Appendix~\ref{app:mixing}. This bound is guaranteed by the epoch schedule used by \MainAlg that ensures that within each epoch $k$ of geometric length, the sequence of realized state-action trajectory is distributed according to the occupancy measure of $\pi_k$.

\subsection{Putting Everything Together}\label{eq:finishing_up}

The proof of Theorem~\ref{thm:main} concludes by combining the above claims. In anticipation of Section~\ref{sec:applications}, for our main assumption to be satisfied we let $\delta = 1 / T$ and define the exploration bonuses as in Lemma~\ref{lem:CB-tabular} or Lemma~\ref{lem:CB-linear-mixture}. This implies the resulting exploration bonuses are valid with probability at least $1 - \delta$, so on this event we can use Lemma~\ref{lem:optimistic-planning} to bound the expected regret of \MainAlg. Setting $\pi_0$ as the uniform policy, we get
\begin{align*}
    \mathfrak{R}_T &\le 2 \bbE \sbrk{\sum_{t = 1}^T \inp{\mu^{\pi_t}, \CB_t}} \\
    &\quad + H \bbE \sbrk{\frac1\eta \log \abs{\calA} + \frac{\eta H^3}{2} K \sprt{T} + 2 H},
\end{align*}
where we used that the expected epoch length is $H$ and $\calH \sprt{\pi^* \| \pi_0} \leq \log \abs{\calA}$. Noticing that $\bbE \sbrk{K} = \sprt{1 - \gamma} T$ and setting the learning rate $\eta = \sqrt{2 \log \abs{\calA} / \sprt{H^2 T}}$, the expected optimization error becomes
\begin{equation*}
    \bbE \sbrk{\frac1\eta \log \abs{\calA} + \frac{\eta H^3 K}{2}} = \sqrt{2 H^2 T \log \abs{\calA}}.
\end{equation*}
The remaining terms in the regret bound corresponding to the sum of exploration bonuses can be bounded by appealing to Lemma~\ref{lem:mixing}. This concludes the proof.


\section{Applications}\label{sec:applications}

We now consider two classes of MDPs and show how to implement our algorithm and derive a regret bound.

    \subsection{Tabular MDPs}

For didactic purposes, we first instantiate \MainAlg in the setting of tabular MDPs with small state and action spaces. As we will see, a simple application of our framework allows us to recover known guarantees in this setting. The algorithm can be found in Appendix~\ref{app:ravi-ucb-tabular}. Let $N_1 (x,a) = 1$ and $N_1' (x,a,x') = 0$ denote the initial counts\footnote{We initialize $N_1$ at 1 to avoid divisions by zero.} for the tuples $\sprt{x, a}$ and $(x,a,x')$. At epoch $k$, for $t \in \calT_k$, we update $\calD_{t + 1} = \sprt{N_{t + 1}, N_{t + 1}'}$ as $N_{t + 1} (x,a) = N_t (x,a) + \II{X_t = x, A_t = a}$ and $N_{t + 1}' (x,a,x') = N_t' (x,a,x') + \II{X_t = x, A_t = a, X_{t + 1} = x'}$. We use $\wh{P}_k \sprt{x' \given x, a} = N_{T_k} (x, a, x') / N_{T_k} (x, a)$ as a model estimate, and given $\beta > 0$, the exploration bonuses are defined as
\begin{equation}\label{eq:cb-tabular}
    \CB_k (x,a) = \frac{\beta}{\sqrt{N_{T_k} (x,a)}}.
\end{equation}
The following lemma shows that an appropriate choice of the scaling parameter $\beta$ ensures the validity of the exploration bonuses.
\begin{lemma}\label{lem:CB-tabular}
    Let $\delta \in \sprt{0, 1}$. Then, setting the coefficient $\beta = 8 H \sqrt{\abs{\calX} \log \sprt{\abs{\calX} \abs{\calA} T / \delta}}$ guarantees that, with probability $1 - \delta$, the validity condition \eqref{eq:valid_CB} is satisfied by $\CB_k$ as defined in Equation~\eqref{eq:cb-tabular} for all $k$.
\end{lemma}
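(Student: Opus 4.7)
The plan is to reduce the validity condition to a concentration bound on the $L_1$ distance between the empirical and true transition distributions. First I would apply Hölder's inequality to obtain
\begin{equation*}
    \babs{\biprod{\gamma P(\cdot|x,a) - \gamma \hP_k(\cdot|x,a)}{V_k}} \le \gamma \norm{P(\cdot|x,a) - \hP_k(\cdot|x,a)}_1 \norm{V_k}_\infty.
\end{equation*}
Since $Q_k$ is truncated to $[0,H]$ by construction, the log-sum-exp formula for $V_k$ (being a convex combination of exponentials of values in $[0,H]$) gives $\norm{V_k}_\infty \le H$. Hence it suffices to show that, uniformly over epochs $k$ and state-action pairs, $\norm{P(\cdot|x,a) - \hP_k(\cdot|x,a)}_1 \le \beta/(H\sqrt{N_{T_k}(x,a)})$.

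Next I would apply Weissman's inequality for the $L_1$-deviation of the empirical distribution of a categorical random variable with support size $\abs{\calX}$: for an iid sample of size $n$ from $P(\cdot|x,a)$, the empirical distribution $\hP$ satisfies
\begin{equation*}
    \PP{\norm{\hP - P(\cdot|x,a)}_1 \ge \sqrt{\tfrac{2(\abs{\calX}\log 2 + \log(1/\delta'))}{n}}} \le \delta'.
\end{equation*}
The technical subtlety is that the data collected by \MainAlg is not iid: the counts $N_{T_k}(x,a)$ are random stopping times relative to an adaptive process. The standard remedy is to condition on the random sequence of time indices $\{t : (X_t,A_t)=(x,a)\}$; by the Markov property, the next states $X_{t+1}$ observed at those times are conditionally iid from $P(\cdot|x,a)$, so Weissman's inequality applies conditionally and therefore unconditionally.

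Finally I would take a union bound over $(x,a) \in \calX \times \calA$ and over the at most $T$ possible values of the count $N_{T_k}(x,a)$, choosing $\delta' = \delta/(\abs{\calX}\abs{\calA}T)$. After absorbing numerical constants (in particular the $\log 2$ into the $\log(\abs{\calX}\abs{\calA}T/\delta)$ factor), the resulting uniform bound is of the form $\norm{P(\cdot|x,a) - \hP_k(\cdot|x,a)}_1 \le c \sqrt{\abs{\calX}\log(\abs{\calX}\abs{\calA}T/\delta)/N_{T_k}(x,a)}$; multiplying by $\gamma H \le H$ shows that $\CB_k(x,a) = \beta/\sqrt{N_{T_k}(x,a)}$ dominates the error whenever $\beta \ge cH\sqrt{\abs{\calX}\log(\abs{\calX}\abs{\calA}T/\delta)}$, and $\beta = 8H\sqrt{\abs{\calX}\log(\abs{\calX}\abs{\calA}T/\delta)}$ suffices.

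The main obstacle is ensuring that the union bound is tight enough to preserve the $\sqrt{\abs{\calX}}$ scaling (rather than $\abs{\calX}$), which is exactly what Weissman's inequality delivers, together with the careful handling of the adaptive filtration via the stopping-time argument described above; the rest is bookkeeping of constants.
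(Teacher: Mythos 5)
Your proposal is correct and follows essentially the same route as the paper, which does not spell out a proof but defers to the standard UCRL2-style argument of \citet{jaksch2010near}: H\"older's inequality with $\norm{V_k}_\infty \le H$ (guaranteed by the truncation), Weissman's $L_1$ concentration bound, and a union bound over state--action pairs and the at most $T$ possible values of the count. The stopping-time/Markov-property argument you invoke to handle the adaptive data collection is exactly the step that makes the union bound over count values necessary, and your constant accounting yields the stated $\beta$.
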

Then, we can bound the bonuses as follows.
\begin{lemma}\label{lem:elliptical-tabular}
    The sum of exploration bonuses generated by \MainAlg satisfies
    \begin{equation*}
        \bbE \sbrk{\sum_{t = 1}^{T^+} \CB_t \sprt{X_t, A_t}} = \calO \sprt{\beta \sqrt{\abs{\calX} \abs{\calA} T}}.
    \end{equation*}
\end{lemma}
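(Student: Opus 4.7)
My plan is to bound the random sum $\sum_{t = 1}^{T^+} \CB_t \sprt{X_t, A_t}$ pathwise (up to polylogarithmic factors) and then take the expectation. Regrouping the time steps by the epoch they belong to, and letting $N_k \sprt{x, a} = N_{T_{k + 1}} \sprt{x, a} - N_{T_k} \sprt{x, a}$ denote the number of visits to $\sprt{x, a}$ during epoch $k$, the sum rewrites as
\begin{equation*}
    \sum_{t = 1}^{T^+} \CB_t \sprt{X_t, A_t} = \beta \sum_{x, a} \sum_k \frac{N_k \sprt{x, a}}{\sqrt{N_{T_k} \sprt{x, a}}}.
\end{equation*}
The main obstacle is that the classical UCRL2-style pigeonhole bound $\sum_{n = 1}^N 1 / \sqrt{n} \le 2 \sqrt{N}$ does not apply directly, because the denominator $\sqrt{N_{T_k} \sprt{x, a}}$ is frozen throughout an epoch rather than incrementing with each visit to $\sprt{x, a}$; in the worst case a single long epoch could accumulate many visits without any shrinkage of the bonus.

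To handle this, I would perform a per-pair doubling-trick decomposition. Call an epoch $k$ \emph{doubling for $\sprt{x, a}$} when $N_{T_{k + 1}} \sprt{x, a} > 2 N_{T_k} \sprt{x, a}$, and non-doubling otherwise. For a non-doubling epoch, writing $n_k = N_{T_k} \sprt{x, a}$, the algebraic identity
\begin{equation*}
    \frac{n_{k+1} - n_k}{\sqrt{n_k}} = \sprt{\sqrt{n_{k+1}} - \sqrt{n_k}} \sprt{1 + \sqrt{n_{k+1} / n_k}}
\end{equation*}
combined with $n_{k+1} / n_k \le 2$ gives the upper bound $\sprt{1 + \sqrt{2}} \sprt{\sqrt{n_{k+1}} - \sqrt{n_k}}$. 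Summing (partial telescoping) over the non-doubling epochs yields at most $\sprt{1 + \sqrt{2}} \sqrt{N_{T^+ + 1} \sprt{x, a}}$ per state-action pair. Aggregating over $\sprt{x, a}$ via Cauchy--Schwarz with $\sum_{x, a} N_{T^+ + 1} \sprt{x, a} = T^+$ then bounds this contribution by $\calO \sprt{\sqrt{\abs{\calX} \abs{\calA} T^+}}$.

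For doubling epochs, there are at most $\log_2 T + 1$ per state-action pair, since each such event at least doubles the count for that pair. The delicate point is to bound their joint contribution without losing an $\abs{\calX} \abs{\calA}$ factor, which I would do via a second Cauchy--Schwarz applied jointly across state-action pairs and doubling events, exploiting that the total number of doubling events is at most $\abs{\calX} \abs{\calA} \log_2 T$ and that $\sum_{\sprt{x, a}, \text{doubling}} N_{T_{k + 1}} \sprt{x, a} \le 2 T^+$ by a per-pair geometric-sum argument on the doubling values. This yields a doubling contribution of order $\sqrt{\abs{\calX} \abs{\calA} T^+ \log T}$. Combining the two pieces and taking expectations, using $\bbE \sbrk{T^+} \le T + H$ since $T^+ - T$ is dominated by a geometric random variable with mean $H$, yields the claimed bound $\calO \sprt{\beta \sqrt{\abs{\calX} \abs{\calA} T}}$, with the $\sqrt{\log T}$ factor absorbed into the big-$\calO$ notation.
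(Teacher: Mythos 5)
The paper never actually writes out a proof of this lemma---it defers to \citet{jaksch2010near} and \citet{fruit2018efficient}---so I will assess your argument on its own terms. You have correctly isolated the one genuinely non-standard difficulty: unlike in UCRL2, where an episode is terminated the moment some count doubles, here the count in the denominator is frozen for an entire epoch of geometric length, so the classical pigeonhole bound does not apply verbatim. Your treatment of the \emph{non-doubling} epochs is correct: the algebraic identity, the partial telescoping over a subset of epochs (valid because every increment $\sqrt{n_{k+1}}-\sqrt{n_k}$ is nonnegative), and the Cauchy--Schwarz aggregation over state-action pairs all go through, yielding an $\calO\bigl(\sqrt{\abs{\calX}\abs{\calA}T}\bigr)$ contribution.

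The gap is in the doubling epochs. Your two ingredients ($\abs{I}\le\abs{\calX}\abs{\calA}\log_2 T$ doubling events and $\sum_I N_{T_{k+1}}\le 2T^+$) can only be combined via Cauchy--Schwarz into $\sqrt{\abs{\calX}\abs{\calA}T^+\log T}$ if each doubling term $(n_{k+1}-n_k)/\sqrt{n_k}$ is first bounded by $\calO(\sqrt{n_{k+1}})$, and that step is exactly what fails when $n_{k+1}/n_k$ is unbounded: with $n_k=1$ and $n_{k+1}=L$ the term equals $L-1$, not $\calO(\sqrt{L})$, losing a factor up to $\sqrt{\max_k\abs{\calT_k}}\approx\sqrt{H\log T}$. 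A concrete counterexample: take $\nu_0$ uniform over $\abs{\calX}$ absorbing states with a single action and $T\approx\abs{\calX}H/2$; then a constant fraction of epochs visit a fresh pair with count $1$ for $\Theta(H)$ steps, so the doubling contribution is $\Theta(T)$, which exceeds your claimed $\sqrt{\abs{\calX}\abs{\calA}T^+\log T}=\Theta(T\sqrt{\log T/H})$ for large $H$. (This same example shows the lemma as stated implicitly hides an additive lower-order term, just as Lemma~\ref{lem:elliptical} carries the additive $C_1(T)$ term.) The fix is to handle doubling epochs per pair rather than jointly: if $k_1<k_2<\dots$ are the doubling epochs for a fixed $(x,a)$, then $n_{k_j}\ge 2^{j-1}$, so
\begin{equation*}
    \sum_j \frac{n_{k_j+1}-n_{k_j}}{\sqrt{n_{k_j}}} \le \Bigl(\max_k\abs{\calT_k}\Bigr)\sum_{j\ge1} 2^{-(j-1)/2} = \calO\Bigl(\max_k\abs{\calT_k}\Bigr),
\end{equation*}
and summing over pairs and invoking Lemma~\ref{lem:max-geometric} gives an additive $\calO\bigl(\abs{\calX}\abs{\calA}H\log T\bigr)$ in expectation---the exact analogue of how the ``bad epochs'' are quarantined in Lemma~\ref{lem:bad-epochs} for the linear mixture case. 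With that replacement your argument is complete.
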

We refer the reader to previous works for the proofs of the above two lemmas (see, \eg, \citealp{jaksch2010near,fruit2018efficient}). Combining the above two results gives a regret bound of order ${\abs{\calX} H \sqrt{\abs{\calA} T}}$, as expected.

    \subsection{Linear Mixture MDPs}

We now focus on a class of MDPs commonly referred to as \emph{linear mixture MDPs} \cite{MJTS20, AJSzWY20} formally defined as follows.
\begin{assumption}[Linear mixture MDP]
\label{asp:linear-mixture-mdp}
    There exist a known feature map $\psi: \calX \times \calA \times \calX \ra \bbR^d$, and an unknown $\theta \in \bbR^d$ with $\norm{\theta}_2 \leq B$ such that $P (x' | x, a) = \sum_{i = 1}^d \theta_i \psi_i (x, a, x')$. Furthermore, for any $(x,a) \in \calX \times \calA$, $V \in \sbrk{0, H}^\calX$,
    \begin{equation*}
        \norm{\sum_{x' \in \calX} \psi (x,a,x') V \sprt{x'}}_2 \leq B H.
    \end{equation*}
\end{assumption}
Here, we suppose $\calM$ satisfies Assumption~\ref{asp:linear-mixture-mdp}. While remotely related to the notion of linear MDPs \cite{jin2020provably, YW19}, linear mixture MDPs are a distinct class of models that cannot be captured in that framework, and have been widely studied in the past few years as linear MDPs---we refer to \citet{zhou2021provably} for further discussion. As often assumed in the related literature, we assume the map $\phi_k (x,a) = \sum_{x'} \psi (x,a,x') V_k \sprt{x'}$ can be computed (or approximated) efficiently. We provide a detailed discussion of all such computational matters in Section~\ref{sec:conc-limitations}.

The algorithm is in Appendix~\ref{app:ravi-ucb-mixture}. Let $\lambda > 0$ be a regularization parameter, $\Lambda_1 = \lambda I$, and $b_1 = 0$. At epoch $k$, for $t \in \calT_k$, the data is stored as $\calD_{t + 1} = \sprt{\Lambda_{t + 1}, b_{t + 1}}$ where $\Lambda_{t + 1} = \Lambda_t + \phi_k \sprt{x_t, a_t} \phi_k \sprt{x_t, a_t}\transpose$ and $b_{t + 1} = b_t + \phi_k \sprt{x_t, a_t} V_k \sprt{x_{t + 1}}$. $\wh{P}_k = \sum_i \wh{\theta}_{k, i} \psi_i$ is computed via a least-squares regression, where $\wh{\theta}_k = \Lambda_{T_k}^{-1} b_{T_k}$. Given $\beta > 0$, the exploration bonuses are defined as
\begin{equation}\label{eq:cb-mixture}
    \CB_k (x,a) = \beta \norm{\phi_k (x,a)}_{\Lambda_{T_k}^{-1}}.
\end{equation}
We now turn to the validity condition required by Lemma~\ref{lem:optimistic-planning}.

\begin{lemma}\label{lem:CB-linear-mixture}
    Let $\delta \in \sprt{0, 1}$. Then, setting the coefficient $\beta = H \sqrt{2 \sprt{\frac{d}{2} \log \sbrk{1 + \frac{T B^2 H^2}{\lambda d}} + \log \frac{1}{\delta}}} + \sqrt{\lambda} B$ guarantees that, with probability $1 - \delta$, the validity condition \eqref{eq:valid_CB} is satisfied by $\CB_k$ as defined in Equation~\eqref{eq:cb-mixture} for all $k$.
\end{lemma}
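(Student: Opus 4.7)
The plan is to exploit the linear mixture structure of the model to reduce bounding the transition-error $|\langle \gamma(P-\hat P_k)(\cdot|x,a),V_k\rangle|$ to controlling the parameter estimation error $\|\hat\theta_k-\theta\|_{\Lambda_{T_k}}$, and then to invoke a standard self-normalized concentration bound for vector-valued martingales in the spirit of Abbasi-Yadkori et al.\ (2011). Concretely, Assumption~\ref{asp:linear-mixture-mdp} gives
\begin{equation*}
    \iprod{\gamma P(\cdot|x,a) - \gamma \hat P_k(\cdot|x,a)}{V_k} = \gamma \iprod{\theta - \hat\theta_k}{\phi_k(x,a)},
\end{equation*}
so after applying Cauchy--Schwarz with the matrix $\Lambda_{T_k}$ and using $\gamma\le 1$, it suffices to show that $\|\hat\theta_k - \theta\|_{\Lambda_{T_k}} \le \beta$ holds uniformly over $k$ with probability at least $1-\delta$.

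Next I would decompose the least-squares estimator. Writing $V_\ell(x_{t+1}) = \iprod{\theta}{\phi_\ell(x_t,a_t)} + \epsilon_t$, with $\epsilon_t = V_\ell(x_{t+1}) - \EEcc{V_\ell(x_{t+1})}{x_t,a_t}$, plugging in $\hat\theta_k = \Lambda_{T_k}^{-1}b_{T_k}$ and collecting terms yields
\begin{equation*}
    \hat\theta_k - \theta = -\lambda\Lambda_{T_k}^{-1}\theta + \Lambda_{T_k}^{-1}\sum_{\ell<k}\sum_{t\in\calT_\ell}\phi_\ell(x_t,a_t)\epsilon_t.
\end{equation*}
Taking the $\Lambda_{T_k}$-norm and using $\|\theta\|_2\le B$ together with $\lambda\Lambda_{T_k}^{-1}\preceq I$ bounds the bias term by $\sqrt{\lambda}B$, leaving a martingale term $\|\sum_{\ell,t}\phi_\ell\epsilon_t\|_{\Lambda_{T_k}^{-1}}$ to control. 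Crucially, $\phi_\ell(x_t,a_t)$ depends only on $V_\ell$, which is fixed at time $T_\ell\le t$, so the features are predictable; the noise $\epsilon_t$ is zero-mean conditionally on the history and bounded by $H$ because $V_\ell\in[0,H]$, hence $H$-sub-Gaussian.

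Applying the self-normalized tail bound of Abbasi-Yadkori et al.\ (Theorem~1), which is anytime and therefore handles all epochs $k$ simultaneously, gives with probability at least $1-\delta$:
\begin{equation*}
    \norm{\sum_{\ell,t}\phi_\ell\epsilon_t}_{\Lambda_{T_k}^{-1}}^2 \le 2H^2\pa{\tfrac12\log\tfrac{\det\Lambda_{T_k}}{\det(\lambda I)} + \log\tfrac{1}{\delta}}.
\end{equation*}
The determinant-trace inequality, combined with the feature bound $\|\phi_\ell(x,a)\|_2\le BH$ from Assumption~\ref{asp:linear-mixture-mdp} and the crude length bound $T_k\le T$, yields $\log(\det\Lambda_{T_k}/\det(\lambda I))\le d\log(1+TB^2H^2/(\lambda d))$. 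Substituting and adding the bias piece produces exactly the stated $\beta$; combined with the initial Cauchy--Schwarz step this establishes validity of $\CB_k$ for every $k$ on a single high-probability event.

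The main subtlety -- and the place where I would be most careful -- is justifying that the concentration bound applies despite the fact that both the regression target $V_\ell$ and the regressor $\phi_\ell$ change between epochs and are themselves random functions of past data. The anytime form of the self-normalized inequality accommodates this as long as features and noise remain adapted to the correct filtration, which is ensured here by $V_\ell$ being fixed at the start of epoch $\ell$ and by the fact that $x_{t+1}$ is drawn from $P(\cdot|x_t,a_t)$ regardless of the algorithm's state.
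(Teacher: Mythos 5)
Your proposal is correct and follows essentially the same route as the paper's proof: the same decomposition of the least-squares error into a $\lambda\Lambda_{T_k}^{-1}\theta$ bias term (bounded by $\sqrt{\lambda}B$) plus a martingale term, the same Cauchy--Schwarz step against $\norm{\phi_k(x,a)}_{\Lambda_{T_k}^{-1}}$, the same anytime self-normalized concentration bound with $\sigma = H$ justified by the truncation $V_k \in [0,H]$, and the same trace-determinant bound on $\log\det\Lambda_{T_k}$. Phrasing the argument as a confidence ellipsoid for $\wh{\theta}_k - \theta$ rather than bounding the two pieces of $\bigl(\hP_k V_k - P V_k\bigr)(x,a)$ directly is only a cosmetic difference, and your attention to the measurability of $\phi_\ell(x_t,a_t)$ (fixed at the start of epoch $\ell$) is exactly the point the paper relies on.
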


The proof is in Appendix~\ref{app:CB}. It relies on standard techniques regarding linear mixture MDPs \cite{zhou2021provably, cai2020provably}. One important property required is the boundedness of each $V_k$ that is guaranteed by the truncation. Then, we can bound the sum of the exploration bonuses with the following lemma.

\begin{lemma}\label{lem:elliptical}
    The sum of exploration bonuses generated by \MainAlg satisfies
    \begin{align*}
        \bbE \sbrk{\sum_{t = 1}^{T^+} \CB_t \sprt{X_t, A_t}} = \calO \sprt{\beta \sqrt{d H T} \log \sprt{T}}.
    \end{align*}
\end{lemma}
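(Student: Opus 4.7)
The plan is to bound the sum of bonuses via Cauchy--Schwarz followed by an elliptical-potential-style argument that carefully handles the staleness introduced by the epoch structure, in which the bonus uses the covariance $\Lambda_{T_k}^{-1}$ fixed at the start of each epoch $k$ rather than the continuously updated $\Lambda_t^{-1}$. First, since $\CB_t(X_t, A_t) = \beta \norm{\phi_{k(t)}(X_t, A_t)}_{\Lambda_{T_{k(t)}}^{-1}}$, with $k(t)$ denoting the epoch containing $t$, Cauchy--Schwarz yields
\begin{equation*}
    \sum_{t = 1}^{T^+} \CB_t(X_t, A_t) \le \beta \sqrt{T^+ \cdot S}, \qquad S := \sum_{t = 1}^{T^+} \norm{\phi_{k(t)}(X_t, A_t)}^2_{\Lambda_{T_{k(t)}}^{-1}}.
\end{equation*}
Since $T^+ - T$ is bounded by a single geometric random variable of mean $H$, $\bbE[T^+] = O(T)$; combined with $\bbE[\sqrt{T^+ S}] \le \sqrt{\bbE[T^+] \bbE[S]}$ (a second Cauchy--Schwarz on the expectation), it then suffices to bound $\bbE[S]$.

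To bound $\bbE[S]$, I would decompose the sum by epoch and use the identity $\sum_{t \in \calT_k} \phi_k(X_t, A_t) \phi_k(X_t, A_t)^\transpose = \Lambda_{T_{k+1}} - \Lambda_{T_k}$ (which follows from the update rule in Algorithm~\ref{alg:ravi-ucb}) to rewrite the within-epoch contribution as $\trace{\Lambda_{T_k}^{-1}(\Lambda_{T_{k+1}} - \Lambda_{T_k})}$. Setting $M_k := \Lambda_{T_k}^{-1/2} \Lambda_{T_{k+1}} \Lambda_{T_k}^{-1/2}$, whose eigenvalues are all at least $1$, this equals $\trace{M_k} - d$, and the elementary scalar inequality $\mu - 1 \le \mu \log \mu$ (valid for $\mu \ge 1$) applied eigenvalue-wise gives
\begin{equation*}
    \trace{M_k} - d \le \lambda_{\max}(M_k) \cdot \log \det(M_k).
\end{equation*}
Summing over $k$, the log-determinants telescope to $\sum_k \log\det(M_k) = \log\det(\Lambda_{T_{K+1}}\Lambda_1^{-1}) \le d \log(1 + T B^2 H^2 / (\lambda d))$, which is the classical determinant bound.

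The remaining task is to control $\max_k \lambda_{\max}(M_k)$. Assumption~\ref{asp:linear-mixture-mdp} ensures $\norm{\phi_k(x, a)}_2 \le BH$, and combined with $\Lambda_{T_k} \succeq \lambda I$, each rank-one update inside epoch $k$ contributes at most $B^2 H^2 / \lambda$ to the spectral norm of $M_k - I$, so $\lambda_{\max}(M_k) \le 1 + n_k B^2 H^2 / \lambda$ with $n_k := \abs{\calT_k}$. The epoch lengths $n_k$ are independent geometric random variables with parameter $1 - \gamma$, and there are at most $T$ epochs, so a standard maximal-inequality computation yields $\bbE[\max_k n_k] = O(H \log T)$. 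Combining these bounds (with the specific choice of $\lambda$ made in Lemma~\ref{lem:CB-linear-mixture}) gives $\bbE[S] = O(dH \log^2 T)$, and plugging back into the initial Cauchy--Schwarz step produces the claimed $O(\beta \sqrt{dHT} \log T)$.

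The main obstacle is precisely this staleness: the matrix $\Lambda_{T_k}^{-1}$ appearing in $\CB_k$ is not updated during epoch $k$, so the classical elliptical potential lemma---which relies on the continuously-updated $\Lambda_t^{-1}$---does not directly apply. The trace-vs-log-determinant inequality introduces a factor $\lambda_{\max}(M_k)$ that is uncontrolled in the worst case; the resolution is that the geometric epoch schedule forces $\max_k n_k = O(H \log T)$ with high probability, which is exactly what accounts for the extra $\sqrt{H}$ overhead seen in the final bound relative to the standard $\sqrt{dT}$ elliptical potential baseline.
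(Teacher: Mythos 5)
Your overall architecture is close in spirit to the paper's: the trace-versus-log-determinant step (via $\mu - 1 \le \mu \log \mu$) plays the same role as the paper's Jensen-on-$\log\det$ argument for handling the stale covariance $\Lambda_{T_k}$, and the bound $\bbE\sbrk{\max_k \abs{\calT_k}} = \calO\sprt{H \log T}$ is exactly the paper's Lemma~\ref{lem:max-geometric}. However, there is a genuine quantitative gap in the middle: your bound on $\lambda_{\max}(M_k)$ is $1 + n_k B^2 H^2 / \lambda$, so $\bbE\sbrk{\max_k \lambda_{\max}(M_k)} = \calO\sprt{B^2 H^3 \log T / \lambda}$, and your argument actually delivers $\bbE\sbrk{S} = \calO\sprt{B^2 H^3 d \log^2 T / \lambda}$, not the $\calO\sprt{d H \log^2 T}$ you assert --- the factor $B^2H^2/\lambda$ has been silently dropped. (No choice of $\lambda$ rescues this: Lemma~\ref{lem:CB-linear-mixture} does not fix $\lambda$, and Theorem~\ref{thm:regret-mixture-mdp} takes $\lambda = 1$; taking $\lambda = B^2H^2$ would instead inflate $\beta$ through its $\sqrt{\lambda}B$ term.) Plugged back into your first Cauchy--Schwarz step this yields $\calO\bpa{\beta B H^{3/2} \sqrt{dT} \log T}$, which is off by a factor of $BH$ from the claim. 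The issue is not merely a loose intermediate bound: $S$ itself can genuinely be of order $B^2H^2/\lambda$ from a handful of early steps where $\Lambda_{T_k} \approx \lambda I$ and $\twonorm{\phi} \approx BH$, so applying Cauchy--Schwarz over all $T^+$ terms \emph{before} capping the individual squared norms cannot give the stated rate.

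The missing ingredient is the truncation at $1$. The paper first replaces $\norm{\phi_{k,t}}_{\Lambda_k^{-1}}$ by $1 \wedge \norm{\phi_{k,t}}_{\Lambda_k^{-1}}$, which requires separately handling the epochs containing a time step with $\norm{\phi_{k,t}}_{\Lambda_k^{-1}} \ge 1$: Lemma~\ref{lem:bad-epochs} shows there are only $\calO\sprt{d \log T}$ such epochs, and their total contribution is a $\sqrt{T}$-free lower-order term of order $\beta d B H^2 \log T / \sqrt{\lambda}$. Only then is Cauchy--Schwarz applied to the truncated quantities, so the analogue of your $\lambda_{\max}(M_k)$ factor is replaced by $C_k = \abs{\calT_k} / \log\sprt{1 + \abs{\calT_k}}$, whose maximum has expectation $\calO\sprt{H \log T}$ with no $B^2H^2$ dependence. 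Note also that your identity $\sum_{t \in \calT_k} \norm{\phi_{k,t}}^2_{\Lambda_{T_k}^{-1}} = \trace{M_k} - d$ relies on summing the \emph{untruncated} squares, so you cannot simply insert the truncation into your argument; you would need to switch to the paper's per-term Jensen step (Lemma~\ref{lem:modified-ellip}) or an equivalent device that tolerates the cap.
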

The proof (Appendix~\ref{app:elliptical}) follows from a series of small (but somewhat tedious) adjustments of a classic result often referred to as the ``elliptical potential lemma'', the main challenge being dealing with the randomized epoch schedule.

Our main technical result regarding the performance of \MainAlg is the following.
\begin{theorem}
    \label{thm:regret-mixture-mdp}
    Suppose that \MainAlg is run with the uniform policy as $\pi_0$, $V_0 = 0$, $\lambda = 1$, a learning rate $\eta = \sqrt{2 \log \abs{\calA} / \sprt{H^2 T}}$, and an exploration parameter $\beta = H \sqrt{2 \sprt{\frac{d}{2} \log \sbrk{1 + \frac{T B^2 H^2}{d}} + \log T}} + B$. Then, the expected regret of \MainAlg satisfies
    \begin{equation*}
        \mathfrak{R}_T = \wt{\calO} \sprt{ \sqrt{\sprt{d^2 H^3 + B^2 d H +  H^4 \log \abs{\calA}} T}}.
    \end{equation*}
\end{theorem}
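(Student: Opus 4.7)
The strategy is to specialise the generic regret bound of Theorem~\ref{thm:main} to the linear mixture setting by plugging in the concrete validity result (Lemma~\ref{lem:CB-linear-mixture}) and the cumulative bonus bound (Lemma~\ref{lem:elliptical}). This follows exactly the template sketched in Section~\ref{eq:finishing_up} and requires no new conceptual ingredient: the conceptual work (average policy improvement without contraction, concentration of the least-squares model, elliptical potential) has already been carried out in the earlier lemmas, and what remains is essentially a substitution and a final simplification.

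First, set the confidence parameter to $\delta = 1/T$ in Lemma~\ref{lem:CB-linear-mixture}. The announced $\beta$ then coincides (up to absolute constants) with the $\beta$ prescribed in Theorem~\ref{thm:regret-mixture-mdp}, because $\log(1/\delta) = \log T$ is the only term of the formula that depends on $\delta$. Let $\calE$ be the event that the resulting bonuses $\CB_k(x,a) = \beta \norm{\phi_k(x,a)}_{\Lambda_{T_k}^{-1}}$ are valid in the sense of Equation~\eqref{eq:valid_CB} for all epochs simultaneously; by Lemma~\ref{lem:CB-linear-mixture}, $\bbP[\calE] \ge 1 - 1/T$. On $\calE$, Theorem~\ref{thm:main} applies directly and gives
\begin{equation*}
    \mathfrak{R}_T \le 2 \bbE \sbrk{\sum_{t = 1}^{T^+} \CB_t \sprt{X_t, A_t}} + \sqrt{2 H^4 \log \abs{\calA} T} + 2 H^2.
\end{equation*}
On the complementary event, each instantaneous gap $\inp{\mu^* - \mu^{\pi_t}}{r}$ is bounded by $1$ since $r \in \sbrk{0,1}^{\calX \times \calA}$ and $\mu^\pi$ is a sub-probability measure, so the failure event contributes at most $T \cdot (1/T) = 1$ to the expected regret and is absorbed into the $\wt{\calO}$.

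Next, apply Lemma~\ref{lem:elliptical} to obtain $\bbE \sbrk{\sum_{t=1}^{T^+} \CB_t(X_t,A_t)} = \wt{\calO}\sprt{\beta \sqrt{d H T}}$. With the prescribed $\beta = \wt{\calO}(H \sqrt{d} + B)$, this becomes
\begin{equation*}
    \bbE \sbrk{\sum_{t=1}^{T^+} \CB_t \sprt{X_t, A_t}} = \wt{\calO}\!\sprt{\sqrt{d^2 H^3 T} + \sqrt{B^2 d H T}}.
\end{equation*}
Combining with the $\sqrt{2 H^4 \log \abs{\calA} T}$ term from Theorem~\ref{thm:main}, absorbing the $\OO(H^2)$ and $\OO(1)$ lower-order terms, and using $\sqrt{a} + \sqrt{b} + \sqrt{c} \le \sqrt{3(a + b + c)}$ gives the stated bound $\mathfrak{R}_T = \wt{\calO}\!\sprt{\sqrt{(d^2 H^3 + B^2 d H + H^4 \log \abs{\calA}) T}}$.

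There is no genuine obstacle at this level of the argument; the main subtleties are of bookkeeping type. The one point worth double-checking is that the expectations in Theorem~\ref{thm:main} and Lemma~\ref{lem:elliptical} can be coupled with the validity event of Lemma~\ref{lem:CB-linear-mixture}: the elliptical-potential bound is purely combinatorial in the trajectory $\scbrk{(X_t, A_t)}_t$ and the form of $\CB_k$, so it holds unconditionally, and the one-line failure-event argument above handles the small-probability complement. All other steps, including the instantiation of the feature map $\phi_k(x,a) = \sum_{x'} \psi(x,a,x') V_k(x')$ (whose norm boundedness is guaranteed by Assumption~\ref{asp:linear-mixture-mdp} together with the truncation $V_k \le H$), are transparent.
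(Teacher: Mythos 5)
Your proposal is correct and follows essentially the same route as the paper: instantiate the generic bound of Theorem~\ref{thm:main} with $\delta = 1/T$ in Lemma~\ref{lem:CB-linear-mixture}, absorb the failure event into an $\OO(1)$ term, bound the cumulative bonuses via Lemma~\ref{lem:elliptical}, and substitute $\beta = \wt{\calO}(H\sqrt{d}+B)$. The paper's own write-up (in Section~\ref{eq:finishing_up}) is terser about the failure-event accounting, which you handle explicitly and correctly.
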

$\widetilde{\calO} \sprt{\cdot}$ hides logarithmic factors of $T$, $B$, $d$, and $H$. A perhaps more useful result is the following, derived from an online-to-batch conversion. Suppose \MainAlg returns a policy $\pi_{\text{out}} = \pi_U$  with $U$ being an epoch index chosen uniformly at random from the range of epochs. The following corollary provides a guarantee on the quality of this policy.
\begin{corollary}\label{cor:online-to-batch}
    Let $\varepsilon > 0$. Then, \MainAlg run with the same parameters as before outputs a policy $\pi_{\text{out}}$ satisfying $\bbE \sbrk{\iprod{\mu^* - \mu^{\pi_{\text{out}}}}{r}} \le \varepsilon$ after $T_\epsilon$ steps, with
    \begin{equation*}
        T_\varepsilon = \wt{\calO} \sprt{\sprt{B^2 d H + d^2 H^3 + H^4\log \abs{\calA}} \varepsilon^{-2}}.
    \end{equation*}
\end{corollary}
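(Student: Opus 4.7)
The plan is to perform a standard online-to-batch conversion of the regret bound of Theorem~\ref{thm:regret-mixture-mdp}, which states that under the prescribed parameter choices,
\begin{equation*}
    \mathfrak{R}_T = \wt{\calO}\Bpa{\sqrt{\sprt{B^2 d H + d^2 H^3 + H^4 \log \abs{\calA}} T}}.
\end{equation*}

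The main step is to translate this cumulative regret into a bound on $\EE{\iprod{\mu^* - \mu^{\piout}}{r}}$. Since \MainAlg holds the policy fixed within each epoch, the regret decomposes as $\mathfrak{R}_T = \EE{\sum_{k=1}^{K(T)} \abs{\calT_k} \iprod{\mu^* - \mu^{\pi_k}}{r}}$. Because the \reset is triggered with probability $1-\gamma$ independently of everything else, the epoch lengths $\abs{\calT_k}$ are geometric with mean $H$ and independent of the policies $\pi_k$ (except for a possible boundary contribution from the final, truncated epoch, which is at most $H$ and so absorbed into lower-order terms). Combined with $\EE{K(T)} = (1-\gamma) T = T/H$, sampling $\piout = \pi_U$ with $U$ uniform over $\sbrk{K(T)}$ yields $\EE{\iprod{\mu^* - \mu^{\piout}}{r}} \lesssim \mathfrak{R}_T / T$. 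Imposing $\mathfrak{R}_T/T \le \varepsilon$ and inverting, one obtains
\begin{equation*}
    T_\varepsilon = \wt{\calO}\Bpa{\sprt{B^2 d H + d^2 H^3 + H^4 \log \abs{\calA}} / \varepsilon^2},
\end{equation*}
as required.

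The main technical obstacle I expect is handling the random denominator $K(T)$ when computing $\EE{\frac{1}{K(T)} \sum_k \iprod{\mu^* - \mu^{\pi_k}}{r}}$. I would resolve this using standard binomial concentration of $K(T)$ around its mean $T/H$, combined with the tower property and the independence of $\abs{\calT_k}$ from $\pi_k$, which together give $\mathfrak{R}_T = H \cdot \EE{\sum_{k=1}^{K(T)} \iprod{\mu^* - \mu^{\pi_k}}{r}}$ up to lower-order corrections. A cleaner alternative is to instead sample $\piout = \pi_I$ with $I$ uniform over $\sbrk{T}$, which by the observation recorded in the preliminaries yields the exact identity $\EE{\iprod{\mu^* - \mu^{\piout}}{r}} = \mathfrak{R}_T/T$, and produces the same $T_\varepsilon$ up to constants.
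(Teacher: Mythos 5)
Your proposal is correct and follows essentially the same route as the paper: an online-to-batch conversion of the regret bound of Theorem~\ref{thm:regret-mixture-mdp}, using the identity $\bbE\bigl[\iprod{\mu^* - \mu^{\piout}}{r}\bigr] = \mathfrak{R}_T/T$ recorded in the preliminaries and then inverting for $T$. If anything you are more careful than the paper, which states the corollary with $U$ uniform over \emph{epochs} but justifies it via the \emph{time}-uniform identity; your discussion of the geometric epoch lengths, their independence from $\pi_k$, and the random $K(T)$ correctly fills that small gap, and your fallback of sampling a uniform time index is exactly the paper's own argument.
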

The expectation appearing in the first statement is with respect to the random transitions in the MDP and the epoch scheduling, whereas the expectation in the second one is also with respect to the random choice of the policy. It is possible to remove the former expectation, but the latter is inherent to the online-to-batch conversion process used by our analysis. We will return to this point in Section~\ref{sec:conc-limitations}.


\section{Discussion}\label{sec:conc}
We now discuss the merits and limitations of our results, and point out directions for future research.

    \subsection{Results and Comparisons}\label{sec:conc-results}

There are many differences between our approach and previously proposed optimistic exploration methods that we are aware of. Perhaps the most interesting novelty in our method is that it radically relaxes the optimistic properties that previous methods strive for: instead of calculating estimates of the value function or the MDP model that are strictly optimistic, we only guarantee that our value estimates are optimistic in an average sense. Thus, during its runtime, our algorithm may execute several policies that do not individually satisfy any optimistic properties, even approximately. We find this property to be curious and believe that the ideas we develop to tackle such notions of ``average optimism'' may find other applications. We note though that our planning procedure can be used to produce optimistic policies in a stricter sense by executing several regularized value iteration steps per policy update, until the resulting optimization error vanishes. Doing so results in an improved dependency on $H$ by a factor $\sqrt{H}$ but comes at the cost of a major computational overhead.

While our algorithm is closely related to the MD-MPI method of \citet*{geist2019theory} and our proofs feature several similar steps, we remark that the purpose of our analysis is quite different from theirs, even when disregarding the optimistic adjustment we make to the Bellman operators. Taking a close look at their proofs for the special case of zero approximation errors, one can deduce bounds on our quantity of interest that are of the order $\pa{H + \mathcal{H(\pi^*\|\pi_K)}}/K$ after $K$ iterations. This is faster than what our analysis provides for approximate DP, which is due to the monotonicity of the exact Bellman operator which allows fast last-iterate convergence. The same rate appears in the analysis of regularized policy iteration methods by \citet{AKLM21} (see Theorem~16). Either way, all of these analyses use tools from the analysis of mirror descent first developed by \citet{Martinet1970}, \citet{R76}, and \citet{NY83} (see also \citealp{BT03}). Note that, as the guarantees of these regularization-based methods hold on arbitrary data sequences, our regret guarantees trivially generalize to the case where the rewards change over time in a potentially adversarial fashion (as in, e.g., \citealp{EDKSM09,cai2020provably}). 

\looseness=-1

Another line of work that our contribution seemingly fits into is the one initiated by \citet{LS20} on the topic of regret minimization for discounted MDPs (see also \citealp{HZG21,zhou2021provably}). A closer look reveals that their objective is quite different from ours, in that they aim to upper bound $\sum_{t=1}^T \pa{V^* \sprt{X_t} - V^{\pi_t} \sprt{X_t}}$ \emph{along the trajectory traversed by the learning agent}. This notion of regret has been motivated by a formerly popular notion of ``sample complexity of exploration'' in discounted MDPs---we highlight \citet{KakadeThesis:2003,SLL09} out of the abundant ``PAC-MDP'' literature on this subject. This performance measure is in fact not comparable to ours in almost any possible sense. In fact, it is easy to see that this notion may fail to capture the sample complexity of learning a good policy in a meaningful way: a policy that immediately enters a ``trap'' state that yields zero reward until the end of time will only incur a \emph{constant} regret of order $\frac{1}{1 - \gamma}$, even if there is a policy that yields a steady stream of $+1$ rewards in each round. Thus, without making stringent assumptions about the MDP that rule out such undesirable scenarios, the value of minimizing this notion of discounted regret may be questionable.\looseness=-1

    \subsection{Limitations and Future Directions}\label{sec:conc-limitations}

On a related note, our method suffers from the limitation of requiring access to a reset action taking the agent back to the initial distribution $\nu_0$ at any time. In general, this is necessary to achieve our objectives. Indeed, in MDPs where all states around the initial distribution are transient, it is impossible to learn a good policy from a single stream of experience without resets since the agent only gets to visit the relevant part of the state space once. We thus believe these issues are inherent to learning in discounted MDPs.

Another limitation is that our guarantees only hold on expectation as opposed to high probability. In fact, several of our results can be strengthened to hold in this stronger sense, albeit at the cost of a more involved analysis. In particular, the only parts of our analysis that need to be changed are Lemmas~\ref{lem:mixing} and~\ref{lem:elliptical}, to deal with the randomized epoch schedule. The first of these can be handled via a martingale argument and the second by bounding the number and length of the epochs with high probability. Both of these changes are conceptually simple, but practically tedious so we omit them for clarity. On the other hand, Corollary~\ref{cor:online-to-batch} relies on a randomized online-to-batch conversion, and the result is stated on expectation with respect to the randomization step. Once again, this result can be strengthened to hold with high probability by running a ``best-policy-selection'' subroutine on the sequence of policies produced by the algorithm. This post-processing step is standard in the related literature and we omit details here to preserve clarity.

Based on our current results, generalizing our techniques to the infinite-horizon average-reward setting seems to be challenging but not impossible. The key step in our proof that requires discounting is setting the truncation level at $H = \frac{1}{1 - \gamma}$, which serves the purpose of guaranteeing that our approximate Bellman operator is optimistic. In particular, the truncation level needs to be set large enough so that the inequality of Equation~\ref{eq:truncation_is_important} goes through. We see no natural way to extend this condition to the undiscounted setup. We remain hopeful that this challenge can be overcome with more effort (but may potentially need some significant new ideas).\looseness=-1

Finally, let us remark on the linear mixture MDP assumption that we have used. While arguably well-studied in the past years, this model for linear function approximation has limitations that make it rather difficult to adapt to practical scenarios. The biggest is that learning algorithms in this model need access to an oracle to evaluate sums of the form $\sum_{x'} \psi \sprt{x, a, x'} V \sprt{x'}$, which can only be performed efficiently in special cases. Options include assuming that $\psi \sprt{x, a, \cdot}$ is sparse or the integral can be approximated effectively via Monte Carlo sampling. A major inconvenience that this causes in the implementation of our method is that Q-functions (and policies) cannot be represented effectively with a single low-dimensional object, so these values have to be recalculated on the fly while executing the policy, requiring excessive Monte Carlo integration in runtime. We thus wish to extend our analysis to more tractable MDP models like the model of \citet{jin2020provably}. While it is straightforward to implement our algorithm for linear MDPs, unfortunately the covering number of the value function class used by our algorithm appears to be too large to allow proving strong performance bounds. On a more positive note, we wish to point out that linear mixture MDPs are still a rich family of models that in general is incomparable to linear MDPs, and can subsume many interesting models---we refer to \citep{AJSzWY20} for further discussion. We are optimistic that the limitations of our current analysis can be eventually removed and our method can be adapted to a much broader class of infinite-horizon MDPs.\looseness=-1


\section*{Acknowledgements}

G.~Neu was supported by the European Research Council (ERC) under the European Union’s Horizon 2020 research and innovation programme (Grant agreement No.~950180). Part of this work was done while the second author was visiting the Simons Institute for the Theory of Computing.

\bibliography{ref}
\bibliographystyle{icml2023antoine}

\newpage
\appendix
\onecolumn

\section{Omitted Proofs}
\label{apx:main-proofs}

\subsection{Technical Tools for the Proof of Lemma~\ref{lem:optimistic-planning}}

\begin{lemma}
\label{lem:kl-to-centropy}
    Let $\pi$ and $\pi'$ be two policies, with their corresponding state-action occupancy measures being $\mu^{\pi}$ and  $\mu^{\pi'}$, and their state occupancy measures being $\nu^{\pi}$ and $\nu^{\pi'}$. Then,
    \begin{equation*}
        \KL \sprt{\mu^\pi \middle\| \mu^{\pi'}} \leq \frac{1}{1 - \gamma} \calH \sprt{\pi \| \pi'}.
    \end{equation*}
\end{lemma}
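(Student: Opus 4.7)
\textbf{Proof plan for Lemma~\ref{lem:kl-to-centropy}.}
The plan is to decompose the state-action KL into a sum of a state KL term plus $\calH(\pi\|\pi')$ via the chain rule, and then to control the state KL by unrolling the dynamics. Concretely, since $\mu^\pi(x,a)=\nu^\pi(x)\pi(a|x)$ and likewise for $\pi'$, the chain rule of relative entropy gives
\begin{equation*}
    \KL\sprt{\mu^\pi\,\|\,\mu^{\pi'}} \;=\; \KL\sprt{\nu^\pi\,\|\,\nu^{\pi'}} + \calH\sprt{\pi\,\|\,\pi'},
\end{equation*}
so it suffices to show $\KL(\nu^\pi\,\|\,\nu^{\pi'})\le \frac{\gamma}{1-\gamma}\,\calH(\pi\,\|\,\pi')$.

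To bound the state-occupancy KL, I would use the representation $\nu^\pi = (1-\gamma)\sum_{t\ge 0}\gamma^t \bbP_\pi^{(t)}$, where $\bbP_\pi^{(t)}$ denotes the law of $X_t$ under $\pi$ starting from $X_0\sim\nu_0$. By joint convexity of KL in both of its arguments applied to these two mixtures (with matching weights $(1-\gamma)\gamma^t$),
\begin{equation*}
    \KL\sprt{\nu^\pi\,\|\,\nu^{\pi'}} \;\le\; (1-\gamma)\sum_{t=0}^{\infty}\gamma^t\,\KL\sprt{\bbP_\pi^{(t)}\,\|\,\bbP_{\pi'}^{(t)}}.
\end{equation*}
For each $t$, I would then apply the data-processing inequality to lift the marginal KL to the KL between the full trajectory laws $(X_0,A_0,\dots,X_t)$, which by the Markov factorization equals $\sum_{s=0}^{t-1}\bbE_\pi\sbrk{\KL(\pi(\cdot|X_s)\,\|\,\pi'(\cdot|X_s))}$ (all transition and $\nu_0$ factors cancel in the log-ratio).

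The last step is the algebraic bookkeeping: substitute this into the mixture bound, swap the order of summation between $t$ and the inner index $s$, and use $\sum_{t\ge s+1}\gamma^t = \gamma^{s+1}/(1-\gamma)$ together with $\sum_{s\ge 0}\gamma^s\,\bbP_\pi^{(s)}(x) = \nu^\pi(x)/(1-\gamma)$ to obtain $\KL(\nu^\pi\,\|\,\nu^{\pi'}) \le \frac{\gamma}{1-\gamma}\calH(\pi\,\|\,\pi')$. Adding back the $\calH(\pi\,\|\,\pi')$ term from the chain rule yields the factor $1+\frac{\gamma}{1-\gamma}=\frac{1}{1-\gamma}$, as claimed.

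I expect the main conceptual step to be justifying the convexity/data-processing reduction; the rest is routine index manipulation. An alternative route that avoids joint convexity is to directly verify the chain rule on the infinite trajectory space and identify each per-round contribution with a discounted KL at the corresponding marginal—this should give the same constant but requires a slightly more careful measure-theoretic setup, so I would favor the mixture-convexity argument above for brevity.
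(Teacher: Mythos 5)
Your proof is correct, but it takes a genuinely different route from the paper's after the shared first step. Both arguments open with the chain rule $\KL\sprt{\mu^\pi \| \mu^{\pi'}} = \KL\sprt{\nu^\pi \| \nu^{\pi'}} + \calH\sprt{\pi \| \pi'}$, but the paper then bounds the state term by a short fixed-point argument: it writes $\nu^\pi = \gamma P\transpose \mu^\pi + (1-\gamma)\nu_0$ via the Bellman flow constraints, applies joint convexity to this \emph{two}-component mixture and data processing to $P\transpose$, obtaining $\KL\sprt{\nu^\pi\|\nu^{\pi'}} \le \gamma\,\KL\sprt{\mu^\pi\|\mu^{\pi'}}$, and then rearranges the resulting self-bounding inequality $\KL\sprt{\mu^\pi\|\mu^{\pi'}} \le \gamma\,\KL\sprt{\mu^\pi\|\mu^{\pi'}} + \calH\sprt{\pi\|\pi'}$. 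You instead unroll $\nu^\pi$ as the \emph{infinite} discounted mixture of time-marginals, apply joint convexity termwise, lift each marginal KL to the trajectory law by data processing, and use the Markov factorization to reduce everything to per-step policy divergences before re-summing; your index bookkeeping (the sum over $s \le t-1$, the tail $\sum_{t\ge s+1}\gamma^t$, and the identification with $\nu^\pi/(1-\gamma)$) checks out and yields exactly the claimed constant. The paper's argument is shorter and avoids the infinite mixture, but its final rearrangement implicitly requires $\KL\sprt{\mu^\pi\|\mu^{\pi'}}$ to be finite in order to subtract $\gamma\,\KL\sprt{\mu^\pi\|\mu^{\pi'}}$ from both sides (harmless in the application, where $\pi'$ has full support); your constructive unrolling delivers the finite bound directly and so sidesteps that caveat, at the cost of a more careful (and, as you note, slightly more measure-theoretic) setup on trajectory space.
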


\begin{proof}
    Using the chain rule of the relative entropy, we write
    \begin{equation*}
        \KL \sprt{\mu^\pi \middle\| \mu^{\pi'}} = \KL \sprt{\nu^\pi \middle\| \nu^{\pi'}} + \calH \sprt{\pi \| \pi'}.
    \end{equation*}
    By the Bellman flow constraints in \cref{eq:bellman-flow} and the joint convexity of the relative entropy, we bound the second term as
    \begin{align*}
        \KL \sprt{\nu^\pi \middle \| \nu^{\pi'}} &= \KL \sprt{\gamma P\transpose \mu^\pi + \sprt{1 - \gamma} \nu_0 \middle\| \gamma P\transpose \mu^{\pi'} + \sprt{1 - \gamma} \nu_0} \\
        &\leq \sprt{1 - \gamma} \KL \sprt{\nu_0 \middle\| \nu_0} + \gamma \KL \sprt{P\transpose \mu^\pi \middle\| P\transpose \mu^{\pi'}} \\
        &= \gamma \KL \sprt{P\transpose \mu^\pi \middle\| P\transpose \mu^{\pi'}} \le \gamma \KL \sprt{\mu^\pi \middle\| \mu^{\pi'}},
    \end{align*}
    where we also used the data-processing inequality in the last step. The proof is concluded by reordering the terms.
\end{proof}

\subsection{Proof of Lemma~\ref{lem:CB-linear-mixture}}\label{app:CB}
Let us fix $k \in \sbrk{K}$, $t \in \scbrk{T_k,T_k + 1,\dots, T_{k + 1} - 1}$, $\delta \in \sprt{0, 1}$. We start by recalling the definition of the nominal transition model $\hP_k$ acting on functions $V$ as $\bpa{\hP_k V} \sprt{x, a} = \biprod{\phi_V \sprt{x, a}}{\wh{\theta}_k}$, where we denoted the state-action feature map $\phi_V \sprt{x, a} = \sum_{x' \in \calX} \psi \sprt{x, a, x'} V \sprt{x'}$, and the parameter $\wh{\theta}_k$ can be written out as
\begin{equation*}
    \wh{\theta}_k = \Lambda_{T_k}^{-1} b_{T_k} = \sprt{\sum_{i = 0}^{k - 1} \sum_{j = T_i}^{T_{i+1} - 1} \phi_i \sprt{x_j, a_j} \phi_i \sprt{x_j, a_j}\transpose + \lambda I}^{-1} \sum_{i = 0}^{k - 1} \sum_{j = T_i}^{T_{i+1} - 1} \phi_i \sprt{x_j, a_j} V_i \sprt{x_{j + 1}}.
\end{equation*}

To proceed, we notice that the true transition operator acting on $V$ can be written in a similar form as
\begin{align*}
    \sprt{P V} \sprt{x, a} &= \sum_{x' \in \calX} P \sprt{x' \given x, a} V \sprt{x'} & \mbox{(by definition of $P$)} \\
    &= \sum_{x' \in \calX} \inp{\theta, \psi \sprt{x, a, x'}} V \sprt{x'} & \mbox{(by Assumption~\ref{asp:linear-mixture-mdp})} \\
    &= \inp{\theta, \sum_{x' \in \calX} \psi \sprt{x, a, x'} V \sprt{x'}} \\
    &= \inp{\theta, \phi_V \sprt{x, a}},
\end{align*}
where we used the definition of $\phi_V$ in the last line. Proceeding further with the same expression, we write
\begin{align*}
    \sprt{P V} \sprt{x, a} &= \inp{\phi_V \sprt{x, a}, \Lambda_{T_k}^{-1} \Lambda_{T_k} \theta} \\
    &= \inp{\phi_V \sprt{x, a}, \Lambda_{T_k}^{-1} \sum_{i = 0}^{k - 1} \sum_{j = T_i}^{T_{i+1} - 1} \phi_i \sprt{x_j, a_j} \phi_i \sprt{x_j, a_j}\transpose \theta + \lambda \Lambda_{T_k}^{-1} \theta} \\
    &= \inp{\phi_V \sprt{x, a}, \Lambda_{T_k}^{-1} \sum_{i = 0}^{k - 1} \sum_{j = T_i}^{T_{i + 1} - 1} \phi_i \sprt{x_j, a_j} \sprt{P V_i} \sprt{x_j, a_j} + \lambda \Lambda_{T_k}^{-1} \theta},
\end{align*}
where we used the definition of $\Lambda_{T_k}$ and Assumption~\ref{asp:linear-mixture-mdp} in the last line. Comparing the expressions for $P V$ and $\hP_k V$, we obtain
\begin{equation*}
    \abs{\hP_k V \sprt{x, a} - P V \sprt{x, a}} = \abs{\inp{\phi_V \sprt{x, a}, \Lambda_{T_k}^{-1} \sum_{i = 0}^{k - 1} \sum_{j = T_i}^{T_{i + 1} - 1} \phi_i \sprt{x_j, a_j} \sbrk{V_i \sprt{x_{j + 1}} - \sprt{P V_i} \sprt{x_j, a_j}} - \lambda \Lambda_{T_k}^{-1} \theta}}.
\end{equation*}
Using the Cauchy--Schwartz inequality and taking $V = V_k$, we get
\begin{equation*}
    \abs{\hP_k V_k \sprt{x, a} - P V_k \sprt{x, a}} \leq \norm{\phi_k \sprt{x, a}}_{\Lambda_{T_k}^{-1}} \sprt{\abs{\xi_k} + \abs{b_k}},
\end{equation*}
where $\xi_k = \norm{\sum_{i = 0}^{k - 1} \sum_{j = T_i}^{T_{i + 1} - 1} \phi_i \sprt{x_j, a_j} \sbrk{V_i \sprt{x_{j + 1}} - \sprt{P V_i} \sprt{x_j, a_j}}}_{\Lambda_{T_k}^{-1}}$, and $b_k = \lambda \norm{
\theta}_{\Lambda_{T_k}^{-1}}$. The second term can be easily bounded as $\abs{b_k} \leq \sqrt{\lambda} \norm{\theta}_2 \leq \sqrt{\lambda} B$, using that $\lambda_{\text{min}} \sprt{\Lambda_{T_k}} \geq \lambda $ and the boundedness of the features.

For the first term, observe that $V_i \sprt{x_{j + 1}} - \sprt{P V_i} \sprt{x_j, a_j}$ forms a martingale difference sequence, with increments bounded in $\sbrk{-H, H}$ by the truncation made in the algorithm. Additionally, the feature vectors are bounded as $\norm{\phi_i \sprt{x_j, a_j}}_2 \leq B H$ and the true parameter as $\norm{\theta}_2 \leq B$ by Assumption~\ref{asp:linear-mixture-mdp}. Therefore, we can apply the self-normalized concentration result in
Theorem~\ref{thm:self-concentration} (stated in Appendix~\ref{app:concentration}), which guarantees that with probability at least $1 - \delta$, the following bound holds for all $k \in \sbrk{K}$:
\begin{equation*}
    \xi_k \leq H \sqrt{2 \log \sbrk{\frac{\det \sprt{\Lambda_{T_k}}^{1 / 2} \det \sprt{\lambda I}^{- 1 / 2}}{\delta}}}.
\end{equation*}
The determinants appearing in the bound can be further upper bounded by using that $\det \sprt{\lambda I} = \lambda^d$ and
\begin{align*}
    \det \sprt{\Lambda_{T_k}} &\leq \sprt{\frac{\trace{\Lambda_{T_k}}}{d}}^d & \mbox{(by the trace-determinant inequality)} \\
    &= \frac{1}{d^d} \sprt{\lambda d + \sum_{i = 0}^{k - 1} \sum_{j = T_i}^{T_{i+1} - 1} \norm{\phi_i \sprt{x_j, a_j}}_2^2}^d & \text{(by the definition of $\Lambda_{T_k}$)} \\
    &\leq \sprt{\lambda + \frac{T_k B^2 H^2}{d}}^d & \text{(by the boundedness of the features)} \\
    &\leq \sprt{\lambda + \frac{T B^2 H^2}{d}}^d,
\end{align*}
where in the last step we used $T_k \le T$. We plug this back in the upper-bound on $\xi_k$ to obtain the bound
\begin{equation*}
    \xi_k \leq H \sqrt{2 \sprt{\frac{d}{2} \log \sbrk{1 + \frac{T B^2 H^2}{\lambda d}} + \log \frac1\delta}}.
\end{equation*}
Putting everything together, we have verified that, for all $k \in \sbrk{K}$,
\begin{align*}
    \abs{\inp{P \sprt{\cdot \given x, a} - \hP_k \sprt{\cdot \given x, a}, V_k}} &\leq \norm{\phi_k \sprt{x, a}}_{\Lambda_{T_k}^{-1}} \sprt{H \sqrt{2 \sprt{\frac{d}{2} \log \sbrk{1 + \frac{T B^2 H^2}{\lambda d}} + \log \frac1\delta}} + \sqrt{\lambda} B} \\
    &= \beta \norm{\phi_k \sprt{x, a}}_{\Lambda_{T_k}^{-1}}.
\end{align*}
holds with probability at least $1 - \delta$, where we have defined $\beta$ as
\begin{equation}
\label{eq:bonus-coef}
    \beta = H \sqrt{2 \sprt{\frac{d}{2} \log \sbrk{1 + \frac{T B^2 H^2}{\lambda d}} + \log \frac1\delta}} + \sqrt{\lambda} B.
\end{equation}
This concludes the proof. \qed

\subsection{Proof of Lemma~\ref{lem:mixing}}\label{app:mixing}
For the sake of this proof, we slightly update our notation for $\calT_k$ by setting $\calT_{K \sprt{T}} = \ev{T_{K \sprt{T}}, T_{K \sprt{T}} + 1, \dots, T^+}$. We will use $\calF_{k - 1}$ to denote the filtration generated by the observations up to the end of epoch $k - 1$, and $L_k$ to denote the length of epoch $k$. We start by rewriting the sum of exploration bonuses up to step $T^+$ as
\begin{equation}\label{eq:CB_martingale}
    \sum_{t = 1}^{T^+} \CB_t \sprt{X_t, A_t} = \sum_{k = 1}^{K \sprt{T}} \sum_{t \in \calT_k} \CB_t \sprt{X_t, A_t}.
\end{equation}
By virtue of the definition of $T^+$, all epochs are of geometric length with mean $\frac{1}{1 - \gamma}$.
Now, let us consider a fixed epoch $k$ and define the auxiliary infinite sequence of state-action pairs $X_{k, 0}, A_{k, 0}, X_{k, 1}, A_{k, 1}, \dots$ that is generated independently from the realized sample trajectory $\sprt{X_t, A_t}_{t \in \calT_k}$ given $\F_{k - 1}$ as follows. The initial state $X_{k, 0}$ is drawn from $\nu_0$, and then subsequently for each $i = 0, 1, \dots$, the actions are drawn as $A_{k, i} \sim \pi_k \sprt{\cdot \given X_{k, i}}$ and follow-up states are drawn as $X_{k, i + 1} \sim P \sprt{\cdot \given X_{k, i}, A_{k, i}}$. Recalling the notational convention that $\CB_t = \CB_k$ for all $t \in \calT_k$, we observe that for any
$k$, we have
\begin{align*}
    \EEcc{\sum_{t \in \calT_k} \CB_t \sprt{X_t, A_t}}{\F_{k - 1}} &= \EEcc{\sum_{i = 0}^{L_k - 1} \CB_k \sprt{X_{k, i}, A_{k, i}}}{\F_{k - 1}} \\
    &= \EEcc{\sum_{i = 0}^\infty \II{i < L_k} \CB_k \sprt{X_{k, i}, A_{k, i}}}{\F_{k - 1}} \\
    &= \EEcc{\sum_{i = 0}^\infty \gamma^i \CB_k \sprt{X_{k, i}, A_{k, i}}}{\F_{k - 1}} \\
    &= \sum_{i = 0}^\infty \gamma^i \iprod{u_{k, i}}{\CB_k} = \frac{\iprod{\mu^{\pi_k}}{\CB_k}}{1 - \gamma} \\
    &= \EEcc{L_k \iprod{\mu^{\pi_k}}{\CB_k}}{\F_{k - 1}} = \EEcc{\sum_{t \in \calT_k} \iprod{\mu^{\pi_k}}{\CB_k}}{\F_{k - 1}},
\end{align*}
where in the third line we have observed that $L_k$ follows a geometric law with parameter $1 - \gamma$, and is independent of $\sprt{X_{k, i}, A_{k, i}}_i$. In the fourth line we introduced the notation $u_{k, i}$ to denote the joint distribution of $X_{k, i}, A_{k, i}$ given $\F_{k - 1}$ and noticed that the discounted sum of these distributions exactly matches the definition of the occupancy measure $\mu^{\pi_k}$ up to the normalization constant $\sprt{1 - \gamma}$, and finally concluded by observing that $\EEcc{L_k}{\F_{k - 1}} = \frac{1}{1 - \gamma}$. 

The proof is completed by summing up over all epochs, taking marginal expectations, and noticing that
\begin{equation*}
    \EE{\sum_{t = 1}^T \iprod{\mu^{\pi_t}}{\CB_t}} \le \EE{\sum_{t = 1}^{T^+} \iprod{\mu^{\pi_t}}{\CB_t}} = \EE{\sum_{k = 1}^{K \sprt{T}} \sum_{t \in \calT_k} \iprod{\mu^{\pi_k}}{\CB_k}}.
\end{equation*}
\qed

\subsection{Proof of \cref{lem:elliptical}}\label{app:elliptical}
The proof is based on a classic ``pigeonhole'' argument often called the ``elliptical potential lemma'' (\eg, Lemma~19.4 in \citealp{LSz20}, or Section~11.7 in \citealp{cesa2006prediction}, but see also \citealp{LRW79,LW82}). The main challenge of adapting this result to our setting is accounting for the randomized epoch schedule. Another subtle difficulty comes from the fact that Lemma~\ref{lem:optimistic-planning} only bounds the total regret as opposed to the instantaneous regrets in each round, which necessitates arguments that are slightly more involved than what is commonly seen in closely related work.

As for the actual proof, we start by introducing some useful notation that we will use throughout the proof. For $t \in \sbrk{T}$, we use $k_t$ to denote the index of the epoch that $t$ belongs to. For simplicity, for all $k$ and $t$, we will write $\phi_{k, t} = \phi_k \sprt{X_t, A_t}$, $\Lambda_k = \Lambda_{T_k}$. We also define $\calN \sprt{T} = \scbrk{t \in \sbrk{T}: \norm{\phi_{k_t, t}}_{\Lambda_{k_t}^{-1}} \geq 1}$ as the set of ``bad'' time indices where state-action pairs with large feature norms are observed, and $\calE \sprt{T} = \scbrk{k \in \sbrk{K \sprt{T}}: \exists t \in \calT_k, \norm{\phi_{k, t}}_{\Lambda_k^{-1}} \geq 1}$ be the set of epochs containing at least one bad time index. Using these definitions, we rewrite the sum of exploration bonuses as follows:
\begin{align*}
    \bbE \sbrk{\sum_{t = 1}^{T^+} \CB_t \sprt{X_t, A_t}} &= \beta \bbE \sbrk{\sum_{k \in \calE \sprt{T}} \sum_{t \in \calT_k} \norm{\phi_{k, t}}_{\Lambda_k^{-1}} + \sum_{k \notin \calE \sprt{T}} \sum_{t \in \calT_k} \norm{\phi_{k, t}}_{\Lambda_k^{-1}}} \\
    &\leq \beta \bbE \sbrk{\frac{B H}{\sqrt{\lambda}} \sum_{k \in \calE \sprt{T}} \abs{\calT_k} + \sum_{k \notin \calE \sprt{T}} \sum_{t \in \calT_k} \norm{\phi_{k, t}}_{\Lambda_k^{-1}}}  & \text{(using $\twonorm{\phi}\le BH$ and $\Lambda_k \succeq \lambda I$)} \\
    &= \beta \bbE \sbrk{\abs{\calE \sprt{T}}} \frac{B H^2}{\sqrt{\lambda}} + \beta \bbE \sbrk{\sum_{k \notin \calE \sprt{T}} \sum_{t \in \calT_k} \norm{\phi_{k, t}}_{\Lambda_k^{-1}}} & \text{(using Wald's identity)} \\
    &= \beta \bbE \sbrk{\abs{\calE \sprt{T}}} \frac{B H^2}{\sqrt{\lambda}} + \beta \bbE \sbrk{\sum_{k \notin \calE \sprt{T}} \sum_{t \in \calT_k} \sprt{1 \wedge \norm{\phi_{k, t}}_{\Lambda_k^{-1}}}},
\end{align*}
where in the last step we used the definition of $\calE(T)$. We treat the first term separately in Lemma~\ref{lem:bad-epochs}, stated after this proof. This gives the following bound:
\begin{equation*}
    \bbE \sbrk{\sum_{t = 1}^{T^+} \CB_t \sprt{X_t, A_t}} \leq \frac{\beta d B H^2}{\sqrt{\lambda} \log \sprt{2}} \log \sprt{1 + \frac{B^2 H^2 T}{\lambda d}} + \beta \bbE \sbrk{\sum_{k \notin \calE \sprt{T}} \sum_{t \in \calT_k} \sprt{1 \wedge \norm{\phi_{k, t}}_{\Lambda_k^{-1}}}}.
\end{equation*}
Thus, we can focus on the second term in the right hand side. This term can be upper-bounded using the Cauchy--Schwarz inequality as
\begin{align*}
    \sum_{k \notin \calE \sprt{T}} \sum_{t \in \calT_k} \sprt{1 \wedge \norm{\phi_{k, t}}_{\Lambda_k^{-1}}} &\leq \sum_{k = 1}^{K \sprt{T}} \sum_{t \in \calT_k} \sprt{1 \wedge \norm{\phi_{k, t}}_{\Lambda_k^{-1}}} \leq \sqrt{T} \sqrt{\sum_{k = 1}^{K \sprt{T}} \sum_{t \in \calT_k} \sprt{1 \wedge \norm{\phi_{k, t}}_{\Lambda_k^{-1}}^2}}.
\end{align*}
To proceed, we use the inequality $\sprt{x \wedge \abs{\calT_k}} \leq \frac{\abs{\calT_k}}{\log \sprt{\abs{\calT_k} + 1}} \log \sprt{1 + x}$ that is valid for all $x \geq 0$. Setting $C_k = \frac{\abs{\calT_k}}{\log \sprt{\abs{\calT_k} + 1}}$, this gives
\begin{align*}
    \sum_{k = 1}^{K \sprt{T}} \sum_{t \in \calT_k} \sprt{1 \wedge \norm{\phi_{k, t}}_{\Lambda_k^{-1}}^2} &= \sum_{k = 1}^{K \sprt{T}} \frac{1}{\abs{\calT_k}}\sum_{t \in \calT_k} \sprt{\abs{\calT_k} \wedge \abs{\calT_k} \norm{\phi_{k, t}}_{\Lambda_k^{-1}}^2} \le  \sum_{k = 1}^{K \sprt{T}} \frac{C_k}{\abs{\calT_k}}\sum_{t \in \calT_k} \log \sprt{1 + \abs{\calT_k} \norm{\phi_{k, t}}_{\Lambda_k^{-1}}^2} \\
    &\le  \max_k C_k \cdot \sum_{k = 1}^{K \sprt{T}} \frac{1}{\abs{\calT_k}}\sum_{t \in \calT_k} \log \sprt{1 + \abs{\calT_k} \norm{\phi_{k, t}}_{\Lambda_k^{-1}}^2}.
\end{align*}
The sum is handled separately in Lemma~\ref{lem:modified-ellip} stated and proved right after this proof. Putting the result together with our previous calculations, we get 
\begin{equation*}
    \sum_{k \notin \calE \sprt{T}} \sum_{t \in \calT_k} \sprt{1 \wedge \norm{\phi_{k, t}}_{\Lambda_k^{-1}}} \leq \sqrt{T} \sqrt{\max_k C_k} \sqrt{d \log \sprt{1 + \frac{B^2 H^2 T}{\lambda d}}}.
\end{equation*}
The only random quantity left in the upper-bound is the maximum over $C_k$. By concavity of the square-root function and Jensen's inequality, we get
\begin{equation*}
    \bbE \sbrk{\sum_{k \notin \calE \sprt{T}} \sum_{t \in \calT_k} \sprt{1 \wedge \norm{\phi_{k, t}}_{\Lambda_k^{-1}}}} \leq \sqrt{T} \sqrt{\bbE \sbrk{\max_k C_k}} \sqrt{d \log \sprt{1 + \frac{B^2 H^2 T}{\lambda d}}},
\end{equation*}
which we further upper bound by using Lemma~\ref{lem:max-geometric} as
\begin{equation*}
    \bbE \sbrk{\sum_{k \notin \calE \sprt{T}} \sum_{t \in \calT_k} \sprt{1 \wedge \norm{\phi_{k, t}}_{\Lambda_k^{-1}}}} \leq \sqrt{T} \sqrt{H \sprt{4 + 2 \log T}} \sqrt{d \log \sprt{1 + \frac{B^2 H^2 T}{\lambda d}}}.
\end{equation*}

We put together the two terms, and plug in the definition of $\beta$ to get
\begin{align*}
    \bbE \sbrk{\sum_{t = 1}^{T^+} \CB_t \sprt{X_t, A_t}} &\leq C_1 \sprt{T} + \sqrt{T} C_2 \sprt{T},
\end{align*}
where the two factors are defined as
\begin{align*}
    C_1 \sprt{T} &= \sprt{H \sqrt{2 \sprt{\frac{d}{2} \log \sbrk{1 + \frac{B^2 H^2 T}{\lambda d}} + \log T}} + \sqrt{\lambda} B} \frac{d B H^2}{\sqrt{\lambda} \log \sprt{2}} \log \sprt{1 + \frac{B^2 H^2 T}{\lambda d}} \\
    C_2 \sprt{T} &= \sprt{H \sqrt{2 \sprt{\frac{d}{2} \log \sbrk{1 + \frac{B^2 H^2 T}{\lambda d}} + \log T}} + \sqrt{\lambda} B} \sqrt{H \sprt{4+2\log T}} \sqrt{d \log \sprt{1 + \frac{B^2 H^2 T}{\lambda d}}}.
\end{align*}
The proof is then concluded by observing that $C_1 \sprt{T} + C_2 \sprt{T} = \calO \sprt{B H^{3/2} d \log \sprt{T}^{3/2}}$.
\qed

\begin{lemma}
\label{lem:modified-ellip}
    Following the same notations that in Section~\ref{app:elliptical}
    \begin{align*}
        \sum_{k = 1}^K \frac{1}{\abs{\calT_k}} \sum_{t \in \calT_k} \log \pa{1 + \abs{\calT_k} \norm{\phi_{k, t}}_{\Lambda_k^{-1}}^2} \leq d \log \sprt{1 + \frac{B^2 H^2 T}{\lambda d}}.
    \end{align*}
\end{lemma}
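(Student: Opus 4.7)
The plan is to reduce the per-epoch contribution to a log-determinant ratio that telescopes cleanly over $k$, and then conclude via the standard trace-determinant inequality. First I would fix an epoch $k$ and apply Jensen's inequality to the inner sum, using the concavity of the map $x \mapsto \log(1 + |\calT_k| x)$, to obtain
\begin{equation*}
    \frac{1}{|\calT_k|} \sum_{t \in \calT_k} \log\!\pa{1 + |\calT_k|\,\|\phi_{k,t}\|_{\Lambda_k^{-1}}^2} \leq \log\!\pa{1 + \sum_{t \in \calT_k} \|\phi_{k,t}\|_{\Lambda_k^{-1}}^2}.
\end{equation*}
Recalling that $\Lambda_{k+1} = \Lambda_k + \sum_{t \in \calT_k} \phi_{k,t}\phi_{k,t}^\transpose$ by construction of the algorithm, the sum on the right is exactly $\trace{\Lambda_k^{-1}(\Lambda_{k+1}-\Lambda_k)}$.

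The key bridge is then the matrix inequality $\log(1 + \trace{M}) \leq \log\det(I+M)$, valid whenever $M$ has nonnegative real eigenvalues; this follows from $\det(I+M) = \prod_i(1+\lambda_i(M)) \geq 1 + \sum_i \lambda_i(M) = 1 + \trace{M}$ by elementary expansion. Applied to $M = \Lambda_k^{-1}(\Lambda_{k+1}-\Lambda_k)$, which shares its eigenvalues with the PSD matrix $\Lambda_k^{-1/2}(\Lambda_{k+1}-\Lambda_k)\Lambda_k^{-1/2}$, this yields
\begin{equation*}
    \log\!\pa{1+\trace{\Lambda_k^{-1}(\Lambda_{k+1}-\Lambda_k)}} \leq \log\det\bpa{\Lambda_k^{-1}\Lambda_{k+1}} = \log\det\Lambda_{k+1} - \log\det\Lambda_k.
\end{equation*}

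Summing this bound over $k=1,\dots,K$ telescopes to $\log\det\Lambda_{K+1} - \log\det\Lambda_1 = \log\det\Lambda_{K+1} - d\log\lambda$, and the argument concludes by the standard trace-determinant bound $\log\det\Lambda_{K+1} \leq d \log(\trace{\Lambda_{K+1}}/d)$ combined with $\trace{\Lambda_{K+1}} \leq d\lambda + TB^2H^2$, the latter using $\|\phi_{k,t}\|_2 \leq BH$ from Assumption~\ref{asp:linear-mixture-mdp}. Rearranging recovers the claimed bound $d\log(1 + TB^2H^2/(\lambda d))$. I do not anticipate a serious obstacle here; the only nonstandard step is the trace-to-log-det bridge in the middle paragraph, which is what absorbs the $|\calT_k|$ factor inside the logarithm into a clean telescoping sum without any loss that would scale with epoch length. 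Everything else is standard elliptical-potential machinery.
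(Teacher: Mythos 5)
Your proof is correct, and it rests on the same skeleton as the paper's: use Jensen's inequality to absorb the awkward $\abs{\calT_k}$ factor, telescope a log-determinant over epochs, and finish with the trace-determinant inequality and the feature bound $\twonorm{\phi_{k,t}}\le BH$. The middle step is executed differently, though. The paper writes $\Lambda_{k+1}=\Lambda_k^{1/2}\bigl(I+\sum_{t\in\calT_k}\Lambda_k^{-1/2}\phi_{k,t}\phi_{k,t}\transpose\Lambda_k^{-1/2}\bigr)\Lambda_k^{1/2}$, expresses the inner matrix as the average of the matrices $I+\abs{\calT_k}\,\Lambda_k^{-1/2}\phi_{k,t}\phi_{k,t}\transpose\Lambda_k^{-1/2}$, and applies Jensen to the \emph{matrix} concave function $\log\det$, after which the rank-one identity $\det(I+vv\transpose)=1+\twonorm{v}^2$ produces exactly the summands $\log\bigl(1+\abs{\calT_k}\norm{\phi_{k,t}}_{\Lambda_k^{-1}}^2\bigr)$. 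You instead apply Jensen to the \emph{scalar} concave map $x\mapsto\log(1+\abs{\calT_k}x)$ to collapse the inner sum to $\log\bigl(1+\sum_{t}\norm{\phi_{k,t}}_{\Lambda_k^{-1}}^2\bigr)=\log\bigl(1+\trace{\Lambda_k^{-1}(\Lambda_{k+1}-\Lambda_k)}\bigr)$, and then bridge to the determinant via $1+\trace{M}\le\det(I+M)$ for $M$ similar to a PSD matrix. Both bridges are valid and yield the identical telescoping bound $\log\det\Lambda_{k+1}-\log\det\Lambda_k$; your variant has the mild advantage of only needing scalar Jensen plus an elementary eigenvalue expansion, while the paper's variant keeps everything at the level of rank-one updates, which is the form most readers will recognize from the standard elliptical potential lemma. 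There is no gap in your argument.
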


\begin{proof}
    We will follow the steps of the proof of Lemma~19.4 in \citet{LSz20}. First, notice that $\Lambda_k$ can be written as 
    \begin{equation*}
        \Lambda_{k + 1} = \Lambda_k + \sum_{t \in \calT_k} \phi_{k, t} \phi_{k, t}\transpose = \Lambda_k^{1/2} \pa{I + \sum_{t \in \calT_k} \Lambda_k^{-1/2} \phi_{k, t} \phi_{k, t}\transpose \Lambda_k^{-1/2}}\Lambda_k^{1/2}.
    \end{equation*}
    Taking the determinant of the above matrix, we get
    \begin{equation*}
        \det \pa{\Lambda_{k + 1}} = \det \pa{\Lambda_k} \det \pa{I + \sum_{t \in \calT_k} \Lambda_k^{-1/2} \phi_{k, t} \phi_{k, t}\transpose \Lambda_k^{-1/2}}.
    \end{equation*}
    Now, taking logarithms on both sides and using the concavity of $\log \det$, we obtain
    \begin{align*}
        \log \det \pa{\Lambda_{k + 1}} &= \log \det \pa{\Lambda_k} + \log \det \pa{I + \sum_{t \in \calT_k} \Lambda_k^{-1/2} \phi_{k, t} \phi_{k, t}\transpose \Lambda_k^{-1/2}} \\
        &= \log \det \pa{\Lambda_k} + \log \det \pa{\frac{1}{\abs{\calT_k}} \sum_{t \in \calT_k} \pa{I + \abs{\calT_k} \Lambda_k^{-1/2} \phi_{k, t} \phi_{k, t}\transpose \Lambda_k^{-1/2}}} \\
        &\ge \log \det \pa{\Lambda_k} + \frac{1}{\abs{\calT_k}} \sum_{t \in \calT_k} \log \det \pa{I + \abs{\calT_k} \Lambda_k^{-1/2} \phi_{k, t} \phi_{k, t}\transpose \Lambda_k^{-1/2}} \\
        &= \log \det \pa{\Lambda_k} + \frac{1}{\abs{\calT_k}} \sum_{t \in \calT_k} \log \pa{1 + \abs{\calT_k} \norm{\phi_{k, t}}_{\Lambda_k^{-1}}^2},
    \end{align*}
    where the inequality is Jensen's, and the final step follows from using the equality $\det (I + vv\transpose) = (1 + \twonorm{v}^2)$ that holds for any $v \in \real^d$. Summing up for $k$ gives
    \begin{equation*}
        \log \det \pa{\Lambda_{K \sprt{T}}} \ge \log \det \pa{\Lambda_1} + \sum_{k = 1}^K \frac{1}{\abs{\calT_k}} \sum_{t \in \calT_k} \log \pa{1 + \abs{\calT_k} \norm{\phi_{k, t}}_{\Lambda_k^{-1}}^2},
    \end{equation*}
    and furthermore by the trace-determinant inequality we have
    \begin{align*}
        \log \sprt{\frac{\det \Lambda_{K \sprt{T}}}{\det \Lambda_0}} &= \log \sprt{\frac{\det \sprt{\Lambda_{K \sprt{T}}}}{\lambda^d}} \leq d \log \sprt{\frac{\trace{\Lambda_{K \sprt{T}}}}{\lambda d}}.
    \end{align*}
    Finally, the trace can be bounded as
    \begin{equation*}
        \trace{\Lambda_{K \sprt{T}}} = \lambda d + \sum_{k = 1}^{K \sprt{T}} \sum_{t \in \calT_k} \norm{\phi_{k, t}}_2^2 \leq \lambda d + B^2 H^2 T.
    \end{equation*}
    Plugging this back into the previous inequality proves the claim.
\end{proof}

\begin{lemma}
\label{lem:bad-epochs}
The number of epochs that contain a feature vector with a norm larger than one is bounded as
    \begin{equation*}
        \abs{\calE \sprt{T}} \leq \frac{d}{\log \sprt{2}} \log \sprt{1 + \frac{B^2 H^2 T}{\lambda d}} .
    \end{equation*}
\end{lemma}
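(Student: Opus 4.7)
The plan is to run the classic ``at-most-$d$-doublings'' argument for the log-determinant potential $\log\det(\Lambda_k)$, with two ingredients: a per-epoch lower bound showing that an epoch in $\calE(T)$ forces the determinant to at least double, and the global trace-determinant upper bound already used in the proof of Lemma~\ref{lem:modified-ellip}.

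First I would expand the ratio $\det(\Lambda_{k+1})/\det(\Lambda_k)$. Writing
\begin{equation*}
    \Lambda_{k+1} = \Lambda_k^{1/2}\bigl(I + S_k\bigr)\Lambda_k^{1/2},\qquad S_k = \sum_{t \in \calT_k} \Lambda_k^{-1/2}\phi_{k,t}\phi_{k,t}^\top \Lambda_k^{-1/2},
\end{equation*}
we have $\det(\Lambda_{k+1})/\det(\Lambda_k) = \det(I+S_k)$. Since $S_k \succeq 0$ with eigenvalues $s_1,\dots,s_d \geq 0$ satisfying $\sum_i s_i = \trace{S_k} = \sum_{t \in \calT_k}\|\phi_{k,t}\|_{\Lambda_k^{-1}}^2$, expansion of $\prod_i(1+s_i)$ yields $\det(I+S_k) \geq 1 + \trace{S_k}$. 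Hence
\begin{equation*}
    \det(\Lambda_{k+1}) \geq \Bigl(1 + \sum_{t \in \calT_k}\|\phi_{k,t}\|_{\Lambda_k^{-1}}^2\Bigr)\det(\Lambda_k).
\end{equation*}
For any $k \in \calE(T)$ the definition of $\calE(T)$ guarantees the existence of some $t \in \calT_k$ with $\|\phi_{k,t}\|_{\Lambda_k^{-1}}^2 \geq 1$, so $\det(\Lambda_{k+1}) \geq 2\det(\Lambda_k)$. Epochs outside $\calE(T)$ contribute a trivial factor $\geq 1$ because the update is positive semidefinite.

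Next I would chain these inequalities across all epochs $k = 1, \dots, K(T)$. Starting from $\det(\Lambda_1) = \lambda^d$, this gives
\begin{equation*}
    \det(\Lambda_{K(T)+1}) \geq 2^{|\calE(T)|}\lambda^d.
\end{equation*}
For the matching upper bound I would reuse the trace-determinant step from the proof of Lemma~\ref{lem:modified-ellip}: $\det(\Lambda_{K(T)+1}) \leq (\trace{\Lambda_{K(T)+1}}/d)^d \leq (\lambda + B^2H^2T/d)^d$, where the second inequality uses $\|\phi_{k,t}\|_2 \leq BH$ from Assumption~\ref{asp:linear-mixture-mdp} and the fact that the total number of rank-one updates across all epochs is at most $T$.

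Finally, combining the two bounds and taking logarithms yields
\begin{equation*}
    |\calE(T)|\log 2 \leq d\log\!\Bigl(1 + \frac{B^2H^2T}{\lambda d}\Bigr),
\end{equation*}
which is the claim. I do not foresee a real obstacle here: both ingredients are essentially standard manipulations, and the only mild subtlety is the epoch-level (rather than per-step) version of the doubling argument, which is handled cleanly by the $\det(I+S) \geq 1 + \trace{S}$ inequality applied to the full batch $S_k$.
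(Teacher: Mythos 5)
Your proof is correct, but it takes a genuinely different route from the paper's. The paper introduces an auxiliary matrix $G_{k+1} = G_k + \sum_{t \in \calT_k} \phi_{k,t}\phi_{k,t}\transpose \II{t\in\calN(T)}$ that accumulates \emph{only} the bad-index features, applies Jensen's inequality to the concave $\log\det$ over the batch update, and then uses $G_k \preceq \Lambda_k$ (hence $\norm{\phi}_{G_k^{-1}} \ge \norm{\phi}_{\Lambda_k^{-1}}$) to extract a gain of $\log 2$ per bad epoch. You instead work with the full matrix $\Lambda_k$ and replace the Jensen step by the elementary inequality $\det(I+S_k) \ge 1 + \trace{S_k}$, which immediately gives a doubling of the determinant whenever some $t \in \calT_k$ has $\norm{\phi_{k,t}}_{\Lambda_k^{-1}} \ge 1$. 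This is a real simplification: the paper's auxiliary matrix is not an incidental choice but a necessity for its method, since applying Jensen to the \emph{full} epoch batch would dilute the single guaranteed term to $\frac{1}{|\calT_k|}\log(1+|\calT_k|)$, which can be far below $\log 2$; your trace bound avoids this dilution because the trace is additive and one bad index already forces $\trace{S_k}\ge 1$. The only (shared) loose end is the bookkeeping of whether the final covariance matrix accumulates $T$ or $T^+$ rank-one updates, which the paper is equally informal about and which does not affect the stated bound.
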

A simpler version of this statement is given as Exercise~19.3 in \citet{LSz20}, and our proof below drew inspiration from the proof of Lemma~19 in \cite{ouhamma2022bilinear}. We only have to deal with the challenge of randomized epoch schedules, which we do by similar arguments as in the proof of Lemma~\ref{lem:elliptical} above.
\begin{proof}
    Let $k \in \sbrk{K \sprt{T}}$. We define $G_0 = \lambda I$ and $G_{k + 1} = G_k + \sum_{t \in \calT_k} \phi_{k, t} \phi_{k, t}\transpose \II{t\in\calN \sprt{T}} $. We have the following decomposition:
    \begin{align*}
        G_{k + 1} &= G_k^{1 / 2} \sprt{I + \sum_{t \in \calT_k} \sprt{G_k^{- 1 / 2} \phi_{k, t}} \sprt{G_k^{- 1 / 2} \phi_{k, t}}\transpose \II{t\in\calN \sprt{T}}} G_k^{1 / 2} \\
        &= G_k^{1 / 2} \sprt{I + \sum_{t \in \calT_k \cap \calN \sprt{T}} \sprt{G_k^{- 1 / 2} \phi_{k, t}} \sprt{G_k^{- 1 / 2} \phi_{k, t}}\transpose} G_k^{1 / 2}.
    \end{align*}
    Therefore, taking the log-determinant on both sides, we obtain
    \begin{equation*}
        \log \det \sprt{G_{k + 1}} = \log \det \sprt{G_k} + \log \det \sprt{I + \sum_{t \in \calT_k \cap \calN \sprt{T}} \sprt{G_k^{- 1 / 2} \phi_{k, t}} \sprt{G_k^{- 1 / 2} \phi_{k, t}}\transpose}.
    \end{equation*}
    If $\calT_k \cap \calN \sprt{T} = \emptyset$, or equivalently if $k \notin \calE \sprt{T}$ (i.e., there is no ``bad'' state-action pair in the epoch $k$), the second term in the right-hand side is zero. Hence, summing over $k \in \sbrk{\calE \sprt{T}}$, we get
    \begin{equation}
        \log \det \sprt{G_{K \sprt{T}}} = \log \det \sprt{G_0} + \sum_{k \in \calE \sprt{T}} \log \det \sprt{I + \sum_{t \in \calT_k \cap \calN \sprt{T}} \sprt{G_k^{- 1 / 2} \phi_{k, t}} \sprt{G_k^{- 1 / 2} \phi_{k, t}}\transpose}.
    \end{equation}
    Using the concavity of $\log \det$, Jensen's inequality gives us
    \begin{align*}
        \log \det \sprt{G_{K \sprt{T}}} &\geq \log \det \sprt{G_0} \\
        &\qquad + \sum_{k \in \calE \sprt{T}} \frac{1}{\abs{\calT_k \cap \calN \sprt{T}}} \sum_{t \in \calT_k \cap \calN \sprt{T}} \log \det \sprt{I + \abs{\calT_k \cap \calN \sprt{T}} \sprt{G_k^{- 1 / 2} \phi_{k, t}} \sprt{G_k^{- 1 / 2} \phi_{k, t}}\transpose} \\
        &= \log \det \sprt{G_0} + \sum_{k \in \calE \sprt{T}} \frac{1}{\abs{\calT_k \cap \calN \sprt{T}}} \sum_{t \in \calT_k \cap \calN \sprt{T}} \log \sprt{1 + \abs{\calT_k \cap \calN \sprt{T}} \norm{\phi_{k, t}}_{G_k^{-1}}^2},
    \end{align*}
    where the equality follows from the fact that $\det (I + vv\transpose) = (1 + \twonorm{v}^2)$ that holds for any $v \in \real^d$. Then, we notice that $G_k^{-1} \succeq \Lambda_k^{-1}$, and thus we can further bound this expression as
    \begin{equation*}
        \log \det \sprt{G_{K \sprt{T}}} \geq \log \det \sprt{G_0} + \sum_{k \in \calE \sprt{T}} \frac{1}{\abs{\calT_k \cap \calN \sprt{T}}} \sum_{t \in \calT_k \cap \calN \sprt{T}} \log \sprt{1 + \abs{\calT_k \cap \calN \sprt{T}} \norm{\phi_{k, t}}_{\Lambda_k^{-1}}^2}.
    \end{equation*}
    For $k \in \calE \sprt{T}$, $t \in \calT_k \cap \calN \sprt{T}$, we have $\abs{\calT_k \cap \calN \sprt{T}} \geq 1$, and $\norm{\phi_{k, t}}_{\Lambda_k^{-1}} \geq 1$ by definition of $\calN \sprt{T}$. This implies that
    \begin{align*}
        \log \det \sprt{G_{K \sprt{T}}} &\geq \log \det \sprt{G_0} + \sum_{k \in \calE \sprt{T}} \frac{1}{\abs{\calT_k \cap \calN \sprt{T}}} \sum_{t \in \calT_k \cap \calN \sprt{T}} \log \sprt{2} \\
        &\geq \log \det \sprt{G_0} + \log \sprt{2} \abs{\calE \sprt{T}}.
    \end{align*}
    Thus, we have
    \begin{align*}
        \abs{\calE \sprt{T}} &\leq \frac{1}{\log \sprt{2}} \log \sprt{\frac{\det \sprt{G_{K \sprt{T}}}}{\det \sprt{G_0}}} \\
        &= \frac{1}{\log \sprt{2}} \log \sprt{\frac{\det \sprt{G_{K \sprt{T}}}}{\lambda^d}} & \text{(by the definition of  $G_1$)} \\
        &\leq \frac{d}{\log \sprt{2}} \log \sprt{\frac{\trace{G_{K \sprt{T}}}}{\lambda d}}. & \text{(by the trace-determinant inequality)}
    \end{align*}
    Finally, the trace can be bounded as
    \begin{equation*}
        \trace{G_{K \sprt{T}}} = \lambda d + \sum_{k \in \calE \sprt{T}} \sum_{t \in \calT_k \cap \calN \sprt{T}} \norm{\phi_{k, t}}_2^2 \mathbbm{1}_{\calN \sprt{T}} \sprt{t} \leq \lambda d + B^2 H^2 T.
    \end{equation*}
    The proof is concluded by putting this bound together with the previous inequality.
\end{proof}

\begin{lemma}
\label{lem:max-geometric}
The random variables $\scbrk{C_k}_k$, defined for all $k$ by $C_k = \frac{\abs{\calT_k}}{\log \sprt{1 + \abs{\calT_k}}}$ where $\abs{\calT_k}$ is a geometric random variable with parameter $1 - \gamma$, satify the following
    \begin{equation*}
        \EE{\max_k C_k} \leq \frac{4 + 2 \log T}{1 - \gamma}.
    \end{equation*}
\end{lemma}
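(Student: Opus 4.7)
The plan is to reduce the statement to a routine bound on the maximum of (at most) $T$ i.i.d.\ geometric random variables. First, I will use the elementary inequality $\log(1 + x) \ge x/(1+x)$ for $x \ge 0$ (proved by checking that $g(x) = \log(1+x) - x/(1+x)$ vanishes at $x = 0$ and has nonnegative derivative $x/(1+x)^2$) to get the pointwise bound
\begin{equation*}
    C_k = \frac{|\mathcal{T}_k|}{\log(1 + |\mathcal{T}_k|)} \le 1 + |\mathcal{T}_k|.
\end{equation*}
It therefore suffices to bound $\mathbb{E}[\max_k |\mathcal{T}_k|]$. Since every epoch occupies at least one timestep, the total number of epochs satisfies $K(T) \le T$ deterministically, so after extending the actual epoch lengths by further independent $\mathrm{Geom}(1-\gamma)$ samples into an infinite i.i.d.\ sequence $L_1, L_2, \dots$, I have $\max_{k \le K(T)} |\mathcal{T}_k| \le M := \max_{k \le T} L_k$ almost surely.

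Second, a union bound combined with the geometric tail $\mathbb{P}(L_1 \ge m) = \gamma^{m - 1}$ yields $\mathbb{P}(M \ge m) \le \min(1, T \gamma^{m-1})$. I will integrate via the layer-cake formula $\mathbb{E}[M] = \sum_{m \ge 1} \mathbb{P}(M \ge m)$ and split the sum at the threshold $m^\star = \lceil 1 + H \log T \rceil$, where $H = 1/(1-\gamma)$. The head contributes at most $m^\star \le 2 + H \log T$ (using $\mathbb{P}(\cdot) \le 1$), while for the tail the key identity $-\log \gamma \ge 1 - \gamma$ (equivalent to $\log x \le x - 1$ at $x = \gamma$) gives $\gamma^{H \log T} \le e^{-\log T} = 1/T$, so that
\begin{equation*}
    \sum_{m > m^\star} T \gamma^{m - 1} \le \frac{T \gamma^{m^\star}}{1 - \gamma} \le \frac{\gamma}{1-\gamma} \le H.
\end{equation*}

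Putting the pieces together gives $\mathbb{E}[M] \le 2 + H + H\log T$, so $\mathbb{E}[\max_k C_k] \le 1 + \mathbb{E}[M] \le 3 + H(1 + \log T)$. Since $\gamma \in (0, 1)$ ensures $H > 1$, this is at most $H(4 + 2\log T) = (4 + 2\log T)/(1-\gamma)$, which matches the claim. I do not foresee any serious obstacle: the only subtle point is the randomness of $K(T)$, which is disposed of by the deterministic estimate $K(T) \le T$ together with the coupling to an i.i.d.\ sequence, avoiding any stopping-time considerations. Everything else is standard geometric tail bookkeeping, and the margin in the constants leaves some slack.
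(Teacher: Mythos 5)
Your proof is correct and follows essentially the same route as the paper's: both reduce to the maximum of $T$ i.i.d.\ geometric variables via the deterministic bound $K(T) \le T$, apply a union bound on the tail, and split the layer-cake sum at a threshold of order $H \log T$. The only cosmetic difference is the initial reduction --- you use $\log(1+x) \ge x/(1+x)$ to get $C_k \le 1 + \abs{\calT_k}$, whereas the paper uses $\log(1+\abs{\calT_k}) \ge \log 2$ to get $C_k \le 2\abs{\calT_k}$ --- and both yield the stated constant.
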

\begin{proof}
First, notice that $\log \sprt{1 + \abs{\calT_k}} \geq \log 2$ so that $C_k = \frac{\abs{\calT_k}}{\log \sprt{1 + \abs{\calT_k}}} \le \frac{\abs{\calT_k}}{\log 2}$. Next, using the fact that the number of 
epochs is at most $T$, and observing that each $\abs{\calT_k}$ is geometrically distributed with parameter $1 - \gamma$, we can bound $\max_k \abs{\calT_k}$ by a maximum over $T$ independent geometric random variables $Z_1, \dots, Z_T$ with 
parameter $1 - \gamma$:
    \begin{align*}
        \EE{\max_k \abs{\calT_k}} &\leq \EE{\max_{j \in \sbrk{T}} Z_j}\\
        &= \sum_{i=0}^\infty \PP{\max_{j \in \sbrk{T}} Z_j > i} &\mbox{(since each $Z_i$ is nonnegative)}\\
        &\leq {k + T \sum_{i = k}^\infty \PP{Z_1 > i}} & \mbox{(upper bounding the first $k>0$ terms by $1$)} \\
        &= {k + T \sum_{i = k}^\infty \gamma^i} & \mbox{(using that $Z_1$ is geometric with parameter $1 - \gamma$)}\\
        &= {k + \frac{T \gamma^k}{1 - \gamma}},
    \end{align*}
where we have used the formula for the geometric sum in the last step. Now, setting $k = \left\lceil\frac{\log T}{1 - \gamma}\right\rceil$, we get
\begin{align*}
    \EE{\max_k \abs{\calT_k}} &= {k + T \frac{\gamma^k}{1 - \gamma}} \le \frac{1 + \log T}{1 - \gamma} + \frac{T \exp \pa{\frac{\log \gamma}{1 - \gamma} \cdot \log T}}{1-\gamma} \\
    &\le \frac{1 + \log T}{1 - \gamma} + \frac{T \exp \pa{- \log T}}{1 - \gamma} = \frac{2 + \log T}{1 - \gamma},
\end{align*}
where in the second line we have used the inequality $\frac{\log \gamma}{1 - \gamma} \le - 1$ that holds for all 
$\gamma \in \sprt{0, 1}$. The proof is concluded by using that $\log 2 > \frac 12$ and combining the above bound with the bound relating $C_k$ to $\abs{\calT_k}$.
\end{proof}

\newpage

\section{Applications: Algorithm Specifications}

For clarity, we provide the complete algorithms in the context of tabular and linear mixture MDPs. The highlighted parts correspond to the instantiations of the functions \texttt{TRANSITION-ESTIMATE}, \texttt{BONUS}, and \texttt{ADD} from Algorithm~\ref{alg:ravi-ucb}.

    \subsection{Tabular MDPs}\label{app:ravi-ucb-tabular}

In tabular MDPs, we use the maximum likelihood estimates to compute $\wh{P}_k$ and the classical count-based bonuses. Therefore, we only need to store and update the counts $N_t$ and $N_t'$ when interacting with the environment.

\begin{algorithm}[h]
    \caption{\MainAlg for tabular MDPs.}
    \begin{algorithmic}
    \label{alg:ravi-ucb-tabular}
        \STATE {\bfseries Inputs:} Horizon $T$, learning rate $\eta > 0$, confidence parameter $\beta > 0$, value $V_0$, policy $\pi_0$.
        \STATE {\bfseries Initialize:} $t = 1$, $N_1' = 0$, $N_1 = 1$, $Q_1 = E V_0$.
        \FOR{$k = 1, \dots$}
        \STATE $T_k = t$.
        \STATE \textcolor{blue}{$\wh{P}_{k} \sprt{x' \given x, a} = N_{T_k}' \sprt{x, a, x'} / N_{T_k} \sprt{x, a}$.}
        \STATE $V_k \sprt{x} = \frac1\eta \log \sprt{\sum_a \pi_{k - 1} \sprt{a \given x} e^{\eta Q_k \sprt{x, a}}}$.
        \STATE $\pi_k \sprt{a \given x} = \pi_{k - 1} \sprt{a \given x} e^{\eta \sprt{Q_k \sprt{x, a} - V_k \sprt{x}}}$.
        \STATE \textcolor{blue}{$\CB_k \sprt{x, a} = \beta / \sqrt{N_{T_k} \sprt{x, a}}$.}
        \STATE $Q_{k + 1} = \Pi_H\! \sbrk{r + \CB_k + \gamma \wh{P}_k V_k}$.
        \REPEAT
        \STATE Play $a_t \sim \pi_k \sprt{\cdot \given x_t}$, and observe $x_{t + 1}$.
        \STATE \textcolor{blue}{Update $N_{t + 1}' \sprt{x_t, a_t, x_{t + 1}} = N_t' \sprt{x_t, a_t, x_{t + 1}} + 1$, and $N_{t + 1} \sprt{x_t, a_t} = N_t \sprt{x_t, a_t} + 1$.}
        \STATE $t = t + 1$.
        \STATE With probability $1 - \gamma$, \reset to initial distribution: $x_t \sim \nu_0$ and \textbf{break}.
        \UNTIL{$t = T$}
        \ENDFOR
    \end{algorithmic}
\end{algorithm}

    \subsection{Linear Mixture MDPs}\label{app:ravi-ucb-mixture}

In linear mixture MDPs, $\wh{P}_k$ is computed via a least-squares regression, and we use elliptical bonuses for $\CB_k$. Thus, we only need to store and update the empirical covariance matrix $\Lambda_t$ and the vector $b_t$ when interacting with the environment.

\begin{algorithm}[h]
    \caption{\MainAlg for linear mixture MDPs.}
    \begin{algorithmic}
    \label{alg:ravi-ucb-linear-mixture}
        \STATE {\bfseries Inputs:} Horizon $T$, learning rate $\eta > 0$, confidence parameter $\beta > 0$, regularization parameter $\lambda$, value $V_0$, policy $\pi_0$.
        \STATE {\bfseries Initialize:} $t = 1$, $\Lambda_1 = \lambda I$, $b_1 = 0$, $Q_1 = E V_0$.
        \FOR{$k = 1, \dots$}
        \STATE $T_k = t$.
        \STATE \textcolor{blue}{$\wh{\theta}_k = \Lambda_{T_k}^{-1} b_{T_k}$.}
        \STATE \textcolor{blue}{$\wh{P}_{k} = \sum_i \htheta_{k, i} \psi_i$.}
        \STATE $V_k \sprt{x} = \frac1\eta \log \sprt{\sum_a \pi_{k - 1} \sprt{a \given x} e^{\eta Q_k \sprt{x, a}}}$.
        \STATE $\pi_k \sprt{a \given x} = \pi_{k - 1} \sprt{a \given x} e^{\eta \sprt{Q_k \sprt{x, a} - V_k \sprt{x}}}$.
        \STATE \textcolor{blue}{$\phi_k \sprt{x, a} = \sum_{x'} \psi \sprt{x, a, x'} V_k \sprt{x'}$.}
        \STATE \textcolor{blue}{$\CB_k \sprt{x, a} = \beta \norm{\phi_k \sprt{x, a}}_{\Lambda_{T_k}^{-1}}$.}
        \STATE $Q_{k + 1} = \Pi_H\! \sbrk{r + \CB_k + \gamma \wh{P}_k V_k}$.
        \REPEAT
        \STATE Play $a_t \sim \pi_k \sprt{\cdot \given x_t}$, and observe $x_{t + 1}$.
        \STATE \textcolor{blue}{Update $\Lambda_{t + 1} = \Lambda_t + \phi_k \sprt{x_t, a_t} \phi_k \sprt{x_t, a_t}\transpose$, and $b_{t + 1} = b_t + \phi_k \sprt{x_t, a_t} V_k \sprt{x_{t + 1}}$.}
        \STATE $t = t + 1$.
        \STATE With probability $1 - \gamma$, \reset to initial distribution: $x_t \sim \nu_0$ and \textbf{break}.
        \UNTIL{$t = T$}
        \ENDFOR
    \end{algorithmic}
\end{algorithm}

\newpage

\section{Standard Results}

\subsection{Softmax Policies and Value Functions}

In this section, we recall a range of standard facts relating the softmax policies our algorithm uses and the associated value functions. These can be found in numerous papers, textbooks, and lecture notes---for concreteness, see Section~28.1 in \citealp{LSz20} as an example.

\begin{lemma}
\label{lem:md-kl-update}
    Let $\scbrk{V_k}_{k \in \sbrk{K}}$, $\scbrk{\pi_k}_{k \in \sbrk{K}}$, and $\scbrk{Q_k}_{k \in \sbrk{K}}$ be the sequences of functions defined in Algorithm~\ref{alg:ravi-ucb}. Then, the following equalities are satisfied for all $k\in[K]$ and $x\in\calX$:
    \begin{align*}
        V_k \sprt{x} &= \max_{p \in \Delta \sprt{\calA}} \scbrk{\inp{p, Q_k \sprt{x, \cdot}} - \frac1\eta \KL \sprt{p \| \pi_{k - 1} \sprt{\cdot \given x}}} \\
        \pi_k \sprt{\cdot \given x} &= \argmax_{p \in \Delta \sprt{\calA}} \scbrk{\inp{p, Q_k \sprt{x, \cdot}} - \frac1\eta \KL \sprt{p \| \pi_{k - 1} \sprt{\cdot \given x}}}.
    \end{align*}
    Furthermore, for all $k\in[K]$ and $x\in\calX$, we have
    \begin{equation*}
        \sum_{i = 1}^k V_i \sprt{x} = \max_{p \in \Delta \sprt{\calA}} \scbrk{\inp{p, \sum_{i = 1}^k Q_i \sprt{x, \cdot}} - \frac1\eta \KL \sprt{p \| \pi_0 \sprt{\cdot \given x}}}.
    \end{equation*}
\end{lemma}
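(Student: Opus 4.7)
My plan is to verify the three identities by recognizing that the pair $(V_k, \pi_k)$ produced by the algorithm is exactly the Fenchel dual pair associated with the KL-regularized linear program over the simplex. This is the classical log-sum-exp / softmax duality, so no heavy machinery is needed; the only mildly delicate part is the telescoping step used for the third identity.

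\textbf{Step 1: The KL-regularized linear program over the simplex.} I would first recall the standard fact that, for any reference distribution $\mu \in \Delta(\calA)$ with full support and any vector $q \in \real^{\calA}$,
\begin{equation*}
    \max_{p \in \Delta(\calA)}\Bigl\{\iprod{p}{q} - \tfrac{1}{\eta}\KL(p\|\mu)\Bigr\} = \tfrac{1}{\eta}\log\sum_a \mu(a)\,e^{\eta q(a)},
\end{equation*}
with the unique maximizer $p^\star(a) = \mu(a)\,e^{\eta q(a)}/\sum_b \mu(b)\,e^{\eta q(b)}$. This is obtained by forming the Lagrangian for the constraint $\sum_a p(a)=1$ (the nonnegativity constraints are inactive because $\KL$ blows up at the boundary) and solving the stationarity condition, or alternatively by a direct application of Gibbs' variational principle.

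\textbf{Step 2: Applying the duality to $(V_k,\pi_k)$.} I then instantiate the above with $q = Q_k(x,\cdot)$ and $\mu = \pi_{k-1}(\cdot\given x)$. The optimal value is
\begin{equation*}
    \tfrac{1}{\eta}\log\sum_a \pi_{k-1}(a\given x)\,e^{\eta Q_k(x,a)},
\end{equation*}
which is exactly the definition of $V_k(x)$ in Algorithm~\ref{alg:ravi-ucb}, giving the first identity. The corresponding maximizer is
\begin{equation*}
    p^\star(a) = \frac{\pi_{k-1}(a\given x)\,e^{\eta Q_k(x,a)}}{\sum_b \pi_{k-1}(b\given x)\,e^{\eta Q_k(x,b)}} = \pi_{k-1}(a\given x)\,e^{\eta(Q_k(x,a)-V_k(x))},
\end{equation*}
which matches the definition of $\pi_k(a\given x)$ in the algorithm, yielding the second identity.

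\textbf{Step 3: Telescoping the multiplicative update.} For the third claim, I iterate the multiplicative update. From $\pi_k(a\given x) = \pi_{k-1}(a\given x)\,e^{\eta(Q_k(x,a)-V_k(x))}$, a straightforward induction gives
\begin{equation*}
    \pi_k(a\given x) = \pi_0(a\given x)\exp\Bigl(\eta\sum_{i=1}^k Q_i(x,a) - \eta\sum_{i=1}^k V_i(x)\Bigr).
\end{equation*}
Summing both sides over $a \in \calA$ and using that $\pi_k(\cdot\given x)$ is a probability distribution, the factor $\exp(\eta\sum_{i=1}^k V_i(x))$ is revealed to be exactly the normalizing constant
\begin{equation*}
    \sum_a \pi_0(a\given x)\exp\Bigl(\eta\sum_{i=1}^k Q_i(x,a)\Bigr).
\end{equation*}
Taking $\frac{1}{\eta}\log$ of both sides and invoking Step~1 once more (with $q = \sum_{i=1}^k Q_i(x,\cdot)$ and $\mu = \pi_0(\cdot\given x)$) rewrites this log-sum-exp as the claimed maximum of $\iprod{p}{\sum_{i=1}^k Q_i(x,\cdot)} - \frac{1}{\eta}\KL(p\|\pi_0(\cdot\given x))$ over $p \in \Delta(\calA)$, completing the proof.

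There is no real obstacle here; the only thing to be careful about is the bookkeeping in Step~3, where one must verify that the $V_i$ terms separate cleanly out of the exponent and that the normalization of $\pi_k$ is what identifies their cumulative sum with the log-partition function of $\sum_i Q_i$ against the base measure $\pi_0$.
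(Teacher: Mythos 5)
Your proposal is correct and follows essentially the same route as the paper's proof: the Lagrangian/Gibbs duality for the one-step KL-regularized problem, followed by an induction establishing $\pi_k(\cdot\given x)=\pi_0(\cdot\given x)\exp\bigl(\eta\sum_{i=1}^k Q_i(x,\cdot)-\eta\sum_{i=1}^k V_i(x)\bigr)$ and the identification of $\sum_i V_i$ with the log-partition function. The only (cosmetic) difference is that you extract $\sum_i V_i$ from the normalization of $\pi_k$, whereas the paper substitutes the inductive formula into the recursion for $V_k$ and telescopes; both are valid.
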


\begin{proof}
    First, we show that the maximum indeed takes the form claimed in the main paper and that the maximizer is given by a softmax policy. For simplicity, we drop the indices for now and consider the optimization problem
    \begin{equation*}
        \sup_{p \in \Delta \sprt{\calA}} \scbrk{\inp{p, Q} - \frac1\eta \KL \sprt{p \| p'}},
    \end{equation*}
    where $Q \in \bbR^\calA$, and $p' \in \Delta \sprt{\calA}$. As the probability simplex is compact and $\sprt{p \mapsto \inp{p, Q} - \frac1\eta \KL \sprt{p \| p'}}$ is continuous, the supremum is attained at some $p^* \in \Delta \sprt{\calA}$. The Lagrangian function of this optimization problem is given for all $p \in \bbR_+^\calA$ and $ \alpha \in \bbR$ as
    \begin{equation*}
        \calL \sprt{p, \alpha} = \inp{p, Q} - \frac1\eta \KL \sprt{p \| p'} + \alpha \sprt{\inp{p, \mathbf{1}} - 1}.
    \end{equation*}
    Its partial derivative with respect to the primal variable $p(a)$ is
    \begin{equation*}
        \frac{\partial{\calL \sprt{p, \alpha}}}{\partial p(a)} = Q(a) - \frac1\eta \sprt{\log \sprt{\frac{p(a)}{p'(a)}} + 1} + \alpha.
    \end{equation*}
    Setting it to zero gives us the expression
    \begin{equation*}
        p^*(a) = p'(a) \exp \bpa{\eta \sprt{Q(a) + \alpha} - 1}.
    \end{equation*}
    Then, we use the constraint on $p^*$ to find the value of $\alpha$. In particular, $\inp{p^*, \mathbf{1}} = 1$ implies
    \begin{equation*}
        \sum_{a \in \calA} p' \sprt{a} \exp \pa{\eta Q \sprt{a}} = \exp \pa{1 - \eta \alpha},
    \end{equation*}
    from which we deduce that
    \begin{equation*}
        \alpha = \frac1\eta \sprt{1 - \log \sprt{\sum_{a \in \calA} p' \sprt{a} \exp \pa{\eta Q \sprt{a}}}}.
    \end{equation*}
    Denoting $V^* = \frac1\eta \log \sprt{\sum_{a \in \calA} p' \sprt{a} \exp \sbrk{\eta Q \sprt{a}}}$, we plug back the expression of $\alpha$ into $p^*$:
    \begin{equation*}
        p^*(a) = p'(a) \exp \bpa{\eta \sprt{Q(a) - V^*}}.
    \end{equation*}
    From this, we can directly express the relative entropy between $p^*$ and $p'$ as
    \begin{equation*}
        \KL \sprt{p^* \| p'} = \sum_a p^*(a) \log \frac{p^*(a)}{p'(a)} = \sum_a p^*(a) \bpa{Q(a) - V^*} = \iprod{p^*}{Q - V^*\mathbf{1}},
    \end{equation*}
    so that we can write
    \begin{equation*}
        \inp{p^*, Q} - \frac1\eta \KL \sprt{p^* \| p'} = \inp{p^*, Q} - \inp{p^*, Q - V^* \mathbf{1}} = V^*.
    \end{equation*}
    The first statement of the lemma then follows from applying this result to $Q = Q_k \sprt{x, \cdot}$ and $p' = \pi_{k - 1} \sprt{\cdot \given x}$, for $k \in \sbrk{K}$, $x \in \calX$. That is, for any state-action pair $\sprt{x, a} \in \calX \times \calA$ and $k \geq 1$, denoting the maximum $V_k$ and the maximizer $\pi_k$, we have that the following expressions are equivalent to the ones given in the statement of the lemma:
    \begin{align*}
        V_k \sprt{x} &= \frac1\eta \log \sprt{\sum_{a \in \calA} \pi_{k-1} \sprt{a \given x} e^{\eta Q_k \sprt{x, a}}}, \\
        \pi_k \sprt{a \given x} &= \pi_{k-1} \sprt{a \given x} \exp \sprt{\eta \sbrk{Q_k \sprt{x, a} - V_k \sprt{x}}}.
    \end{align*}
    For the second statement, we start by denoting $\bar V_k = \sum_{i = 1}^k V_i$ and $\bar Q_k = \sum_{i = 1}^k Q_i$, for $k \geq 1$, and show by induction that, for $x \in \calX$, the following holds
    \begin{equation*}
        \pi_k \sprt{\cdot \given x} = \pi_0 \sprt{\cdot \given x} \exp \sprt{\eta \sbrk{\bar Q_k \sprt{x, \cdot} - \bar V_k \sprt{x} \mathbf{1}}}.
    \end{equation*}
    Let $x \in \calX$. The case $k = 1$ follows immediately from the previous statement with $Q = Q_1 \sprt{x, \cdot}$ and $p' = \pi_0 \sprt{\cdot \given x}$. Assume the previous equation holds at $k$. Using the first statement with $Q = Q_{k + 1} \sprt{x, \cdot}$ and $p' = \pi_k \sprt{\cdot \given x}$ we have, for $a \in \calA$, $\pi_{k + 1} \sprt{a \given x} = \pi_k \sprt{a \given x} e^{\eta \sbrk{Q_{k + 1} \sprt{x, a} - V_{k + 1} \sprt{x}}}$. Applying the inductive hypothesis, it gives
    \begin{align*}
        \pi_{k + 1} \sprt{a \given x} &= \pi_0 \sprt{a \given x} \exp \sprt{\eta \sbrk{\bar Q_k \sprt{x, a} - \bar V_k \sprt{x}}} \exp \sprt{\eta \sbrk{Q_{k + 1} \sprt{x, a} - V_{k + 1} \sprt{x}}} \\
        &= \pi_0 \sprt{a \given x} \exp \sprt{\eta \sbrk{\bar Q_{k + 1} \sprt{x, a} - \bar V_{k + 1} \sprt{x}}},
    \end{align*}
    which finishes the induction. Then, we move on to the actual statement. We have
    \begin{align*}
        V_k \sprt{x} &= \frac1\eta \log \sprt{\sum_{a \in \calA} \pi_{k - 1} \sprt{a \given x} e^{\eta Q_k \sprt{x, a}}} & \text{(by the first statement)} \\
        &= \frac1\eta \log \sprt{\sum_{a \in \calA} \pi_0 \sprt{a \given x} e^{\eta \sprt{Q_k \sprt{x, a} + \bar Q_{k - 1} \sprt{x, a} - \bar V_{k - 1} \sprt{x}}}} & \text{(by induction)} \\
        &= \frac1\eta \log \sprt{\sum_{a \in \calA} \pi_0 \sprt{a \given x} e^{\eta \sprt{\bar Q_k \sprt{x, a}}}} - \bar V_{k - 1} \sprt{x}.
    \end{align*}
    Therefore, by definition of $\bar V_{k - 1}$,
    \begin{align*}
        \sum_{i = 1}^k V_i \sprt{x} &= \frac1\eta \log \sprt{\sum_{a \in \calA} \pi_0 \sprt{a \given x} e^{\eta \sprt{\bar Q_k \sprt{x, a}}}} \\
        &= \max_{p \in \Delta \sprt{\calA}} \scbrk{\inp{p, \sum_{i = 1}^k Q_i \sprt{x, \cdot}} - \frac1\eta \KL \sprt{p \| \pi_0 \sprt{\cdot \given x}}},
    \end{align*}
    which concludes the proof.
\end{proof}

\subsection{A Self-Normalized Tail Inequality}\label{app:concentration}

\begin{theorem}[Theorem~14.7 in \citet{dlPLS09}, Theorem~2 in \citet{abbasi2011improved}]
\label{thm:self-concentration}
    Let $\scbrk{\eta_t}_{t=1}^\infty$ be a real-valued stochastic process with corresponding filtration $\scbrk{\calF_t}_{t=0}^\infty$. Let $\eta_t | \calF_{t-1}$ be zero-mean and $\sigma$-subGaussian; \ie\ $\bbE \sbrk{\eta_t \given \calF_{t-1}} = 0$, and
    \begin{equation*}
        \forall \lambda \in \bbR, \bbE \sbrk{e^{\lambda \eta_t} \given \calF_{t-1}} \leq e^{\frac{\lambda^2 \sigma^2}{2}}.
    \end{equation*}
    Let $\scbrk{\phi_t}_{t = 0}^\infty$ be an $\bbR^d$-valued stochastic process where $\phi_t$ is $\calF_{t-1}$-measurable. Assume $\Lambda_0$ is a $d \times d$ positive definite matrix, and let $\Lambda_t = \Lambda_0 + \sum_{s=1}^t \phi_s \phi_s\transpose$. Then, for any $\delta > 0$, with probability at least $1 - \delta$, we have for all $t \geq 0$,
    \begin{equation*}
        \norm{\sum_{s=1}^t \phi_s \eta_s}_{\Lambda_t^{-1}}^2 \leq 2 \sigma^2 \log \sbrk{\frac{\det \sprt{\Lambda_t}^{1 / 2} \det \sprt{\Lambda_0}^{- 1 / 2}}{\delta}}.
    \end{equation*}

\end{theorem}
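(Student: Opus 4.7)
The plan is to prove the inequality via the classical \emph{method of mixtures} (Laplace's method for martingales), following the argument of \citet{dlPLS09}; this is the standard route to self-normalized concentration and yields the stated bound uniformly in $t$. The strategy is to construct a one-parameter family of exponential supermartingales indexed by $\lambda \in \bbR^d$, average over $\lambda$ against a carefully chosen Gaussian prior so that the resulting mixture is a nonnegative supermartingale with an explicit closed-form expression, and then apply Ville's maximal inequality to obtain a time-uniform tail bound.

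First I would set $S_t = \sum_{s = 1}^t \phi_s \eta_s$ and $V_t = \sum_{s = 1}^t \phi_s \phi_s\transpose$, so that $\Lambda_t = \Lambda_0 + V_t$, and for each fixed $\lambda \in \bbR^d$ define
\begin{equation*}
    M_t^\lambda = \exp\sprt{\iprod{\lambda}{S_t} - \frac{\sigma^2}{2} \norm{\lambda}_{V_t}^2}.
\end{equation*}
Since $\phi_t$ is $\calF_{t - 1}$-measurable and $\eta_t \mid \calF_{t - 1}$ is $\sigma$-subGaussian, applying the scalar subGaussian MGF bound with parameter $\iprod{\lambda}{\phi_t}$ gives $\bbE\sbrk{\exp(\iprod{\lambda}{\phi_t} \eta_t - \tfrac{\sigma^2}{2} \iprod{\lambda}{\phi_t}^2) \mid \calF_{t - 1}} \leq 1$. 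A direct telescoping of conditional expectations then shows that $(M_t^\lambda)_{t \geq 0}$ is a nonnegative supermartingale with $M_0^\lambda = 1$.

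Next I would perform the mixing step. Let $h$ denote the density on $\bbR^d$ of the centered Gaussian with covariance $(\sigma^2 \Lambda_0)^{-1}$, and define $\bar M_t = \int_{\bbR^d} M_t^\lambda \, h(\lambda) \, \dd\lambda$. By Fubini's theorem (justified by nonnegativity), $(\bar M_t)_t$ inherits the supermartingale property and satisfies $\bar M_0 = 1$. The exponent inside the integral is quadratic in $\lambda$ with Hessian $\sigma^2 \Lambda_t$ and linear term $S_t$, so completing the square around the minimizer $\lambda^* = \sigma^{-2} \Lambda_t^{-1} S_t$ and using the Gaussian normalization constant $\int \exp(-\tfrac{\sigma^2}{2} \norm{\lambda - \lambda^*}_{\Lambda_t}^2) \, \dd\lambda = (2\pi)^{d/2} \sigma^{-d} \det(\Lambda_t)^{-1/2}$ yields the closed form
\begin{equation*}
    \bar M_t = \sqrt{\frac{\det(\Lambda_0)}{\det(\Lambda_t)}} \, \exp\sprt{\frac{1}{2 \sigma^2} \norm{S_t}_{\Lambda_t^{-1}}^2}.
\end{equation*}

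Finally I would apply Ville's inequality for nonnegative supermartingales: $\bbP\bigl[\sup_{t \geq 0} \bar M_t \geq 1 / \delta\bigr] \leq \bbE\sbrk{\bar M_0} \cdot \delta = \delta$. Substituting the closed-form expression for $\bar M_t$, taking logarithms, and rearranging on the complementary event delivers the stated bound simultaneously for all $t \geq 0$. The step that requires the most care, and which I view as the main obstacle, is the mixing/integration calculation: one must verify that Fubini applies (so that the mixture truly is a supermartingale) and that completing the square produces exactly the factor $\sqrt{\det(\Lambda_0) / \det(\Lambda_t)}$ in front of the exponential, since any slip in the determinant bookkeeping would change the constant inside the logarithm. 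The remaining ingredients (the scalar subGaussian MGF bound and Ville's inequality) are entirely standard and require no adaptation.
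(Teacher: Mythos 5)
The paper does not prove this statement: it is imported verbatim as a known result, with the proof deferred to the cited references (Theorem~14.7 of de la Pe\~na et al.\ and Theorem~2 of Abbasi-Yadkori et al.). Your method-of-mixtures argument is exactly the proof given in those references, and the details check out --- the supermartingale property of $M_t^\lambda$, the Gaussian mixing with covariance $(\sigma^2\Lambda_0)^{-1}$, the determinant factor $\sqrt{\det(\Lambda_0)/\det(\Lambda_t)}$ from completing the square, and the application of Ville's inequality are all correct and yield the stated bound.
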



\end{document}